\numberwithin{equation}{section}
\renewcommand{\P}{{\mathcal P}}
\definecolor{ao(english)}{rgb}{0.0, 0.5, 0.0}
\newcommand{\Rd}{\mathbb{R}^d}
\newcommand{\bL}{\bold{L}}
\newcommand{\beq}{\begin{eqnarray*}}
\newcommand{\eeq}{\end{eqnarray*}}
\newcommand{\beqm}{\begin{eqnarray}}
\newcommand{\eeqm}{\end{eqnarray}}
\DeclareMathOperator*{\esssup}{ess\,sup}
\newcommand{\R}{\mathbb{R}}
\newcommand{\N}{\mathbb{N}}
\newcommand{\cf}{\emph{cf.}~}
\newcommand\tinyprod{\scaleto{\prod}{1.75\LMex}}
\newcommand\ocirc[1]{\ThisStyle{\ensurestackMath{%
  \stackon[1pt]{\SavedStyle#1}{\SavedStyle\kern.6\LMpt\circ}}}}
\begin{document}

\title{Uniform Approximation with Quadratic Neural Networks}
\author{\name Ahmed Abdeljawad \email ahmed.abdeljawad@oeaw.ac.at \\
	\addr Johann Radon Institute of Computational and Applied Mathematics (RICAM)\\
	\addr Austrian Academy of Sciences\\
	\addr Altenberger Straße 69, A-4040 Linz, Austria
}

\maketitle

\begin{abstract}
	In this work, we examine the approximation capabilities of deep neural networks utilizing the Rectified Quadratic Unit (ReQU) activation function, defined as \(\max(0,x)^2\), for approximating Hölder-regular functions with respect to the uniform norm. We constructively prove that deep neural networks with ReQU activation can approximate any function within the \(R\)-ball of \(r\)-Hölder-regular functions (\(\mathcal{H}^{r, R}([-1,1]^d)\)) up to any accuracy \(\epsilon \) with at most \(\mathcal{O}\left(\epsilon^{-d /2r}\right)\) neurons and fixed number of layers. This result highlights that the effectiveness of the approximation depends significantly on the smoothness of the target function and the characteristics of the ReQU activation function. Our proof is based on approximating local Taylor expansions with deep ReQU neural networks, demonstrating their ability to capture the behavior of Hölder-regular functions effectively. Furthermore, the results can be straightforwardly generalized to any Rectified Power Unit (RePU) activation function of the form \(\max(0,x)^p\) for \(p \geq 2\), indicating the broader applicability of our findings within this family of activations.
\end{abstract}

\begin{keywords}
	{Function Approximation, Neural Network, Quadratic Activation Function, H\"older spaces}
\end{keywords}

\section{Introduction}

Recently, there has been increasing interest in high-dimensional computational problems, which are typically addressed by algorithms that use a finite number of information operations. The complexity of such problems is defined as the minimum number of information operations required to find an approximating solution within an error \(\epsilon\). Remarkable success has been achieved using artificial neural networks for these problems, making this a very active area of research.

Many authors utilize artificial neural networks for various purposes. In particular, for function approximation, under certain conditions, single-hidden-layer neural networks, known as \emph{shallow neural networks}, can effectively approximate continuous functions on bounded domains (see e.g., \cite{Cybenko89Approximationsuperpositionssigmoidala, Hornik91Approximationcapabilitiesmultilayera}). Networks with many hidden layers, referred to as \emph{deep neural networks}, have revolutionized the field of approximation theory, as demonstrated in works such as \cite{Abdeljawad22Approximationsdeepneurala, Blcskei2017OptimalAW, Cheridito19Efficientapproximationhighdimensional, Daubechies22NonlinearApproximationDeepa, DeVore20Neuralnetworkapproximation, Hornung20Spacetimedeepneural, Grohs19Spacetimeerrorestimates, Langer20Approximatingsmoothfunctions, Petersen17Optimalapproximationpiecewise, Schwab21Deeplearninghigh, Voigtlaender19Approximationdeep, Yarotsky2016ErrorBF}. Deep neural networks are also employed in solving partial differential equations \cite{Beck18Solvingstochasticdifferential, Elbrachter18DNNexpressionrate, Geist20Numericalsolutionparametric, Han17Solvinghighdimensionalpartial, Opschoor20DeepReLUnetworksa, E18DeepRitzMethodb}.
In the literature, there are several results concerning the approximation properties of deep neural networks, where different activation functions are explored to unravel the extreme efficiency of these networks to cite some see for instance \cite{Cybenko89Approximationsuperpositionssigmoidala,Ding18Activationfunctionstheir, Glorot10Understandingdifficultytraining, Hornik91Approximationcapabilitiesmultilayera, Ramachandran18Searchingactivationfunctions,Yarotsky2016ErrorBF}.
Specifically, neural networks with \emph{Rectified Linear Units} (ReLU), defined by \(x \mapsto \max(0, x)\), are widely studied both in theoretical approximation and practical applications.

Several theoretical analyses based on ReQU neural networks have been proposed by various authors for different tasks. For instance, \cite{Chui16Deepnetslocal} introduced a ReQU deep learning approach for approximating functions on low-dimensional manifolds, specifically opting to use the ReQU activation function in their construction. Additionally, Mhaskar in \cite{Mhaskar93Approximationpropertiesmultilayered} demonstrated that deep networks with the so-called $k$th-order sigmoidal activation functions can efficiently approximate any continuous function on any compact subset of a Euclidean space.
Mainly, if $k\geq 0$ is an integer,
a $k$th-order sigmoidal activation is
a function $ \sigma :\mathbb{R} \rightarrow \mathbb{R}$
such that 
\begin{align*}
\lim _{x \rightarrow-\infty} \frac{\sigma(x)}{x^k}=0, \quad \lim _{x \rightarrow \infty} \frac{\sigma(x)}{x^k}=1,
\intertext{and for some constant $C>1$,}
|\sigma(x)| \leq C(1+|x|)^k, \quad x \in \mathbb{R}.
\end{align*}
The study of approximation properties of neural networks with $k$th-order sigmoidal activation functions dates back to the early 1990s (see e.g., \cite{Mhaskar93Approximationpropertiesmultilayered, Mhaskar93Neuralnetworkslocalized}).
Notably, the ReQU activation function is a sigmoidal activation of order 2,
additional information and a comprehensive overview of related work can be found in Section \ref{sec:related_work}. 
Recent research by Google Brain team has introduced more efficient alternatives to Transformers, which have been extensively used in numerous advancements in natural language processing. However, the training and inference costs associated with these models have escalated rapidly.
In response, "Primer" (see \cite{So21PrimerSearchingefficient}) offers a solution by reducing the training costs of large Transformer language models. The improvements in Primer can be largely attributed to two simple modifications, one of which involves utilizing squared ReLU activations, that is, ReQU in our terminology.
See also \cite{Hua22Transformerqualitylineara} for Gated Attention Unit.
Motivated by this empirical finding and the fact that ReQU possesses many interesting theoretical properties, we explore the approximation capabilities of ReQU neural networks when dealing with smooth functions.
It is worth to mention that
our approach and objectives differ significantly from those proposed in \cite{Li19Betterapproximationshigh}. Specifically, our focus is on approximating any Hölder function using ReQU networks, where the approximation error measured with respect to the uniform norm. In contrast, authors in \cite{Li19Betterapproximationshigh} aimed to use RePU networks to construct exact representations of polynomials with optimal size. They also provided approximations using RePU networks for Jacobi-weighted Sobolev and Korobov functions. Our techniques and settings are distinct, particularly when providing theoretical upper bounds for smooth function approximations using ReQU deep neural networks.
Additionally, in \cite{Tang19ChebNetEfficientstable}, the authors extended the results from \cite{Li19Betterapproximationshigh} by constructing deep RePU neural networks based on Chebyshev polynomial approximations. Their work also focused on approximations of polynomials in various settings using these networks.
In \cite{Opschoor21ExponentialReLUDNN}, complexity and approximation upper bounds for holomorphic maps in high dimensions within the framework of RePU neural networks were derived. It was observed that the error decay rate in terms of network size for RePU networks is slightly faster compared to what was achieved with ReLU approximations.
Furthermore, ReQU neural networks have been adapted for solving partial differential equations (PDEs) in \cite{Duan21Convergencerateanalysis}. In this study, the authors established a non-asymptotic convergence rate in the $H^1$ norm for the Deep Ritz Method using ReQU networks. They also succeeded in deriving the approximation error of ReQU networks for functions in $H^2$ with respect to the $H^1$ norm. ReQU neural networks demonstrate superior approximation capabilities for smooth functions because they can represent any monomials and the product operator with fewer nodes and without introducing error.

In this effort, we extend recent advances in the approximation theory of deep neural networks to a different setting. Specifically, this paper addresses the approximation of real-valued functions \( f \) with \emph{Hölder smoothness} with \(d\)-dimensional data. We derive an error bound for approximating Hölder smooth functions using neural networks with ReQU activation function. 

The choice of the ReQU activation function is motivated by its ability to represent both the identity and the product operations without error. Moreover, a ReQU neural network becomes smoother as its depth increases. Additionally, the ReQU network's capability to represent any monomial on a bounded domain makes it an interesting activation function from an approximation-theoretical perspective. Our approach builds on well-established techniques for studying the approximation capabilities of deep neural networks for smooth functions, as discussed in \cite{Kohler21rateconvergencefully, Lu20Deepnetworkapproximation}.

To achieve this, we employ a feedforward neural network and investigate how the choice of the ReQU activation function affects the approximation error and network complexity. Specifically, we focus on the approximation rate between the constructed network and a smooth function, with the error measured with respect to the uniform norm. Theorem \ref{thm_main} is the central result of this paper.

An interesting question for future research is how well ReQU networks approximate Hölder smooth functions, or even broader classes of functions, with respect to different error norms, such as Sobolev or Besov norms. Furthermore, exploring different architectures, such as \emph{ResNet} or \emph{convolutional neural networks}, could provide additional insights, as this paper only considers feedforward neural networks. We believe that the use of the ReQU activation function in deep neural networks will yield further valuable insights.

\subsection{Prior and Related Work}\label{sec:related_work}
Here we provide an overview of the most relevant prior work.
Universal approximation theorems (see \cite{Cybenko89Approximationsuperpositionssigmoidala, Hornik91Approximationcapabilitiesmultilayera}) are among the most well-known results in the field of function approximation by neural networks. These theorems assert that any measurable function defined on a bounded domain can be approximated to any desired accuracy using a neural network with a single hidden layer, commonly referred to as a shallow neural network.
Research on the approximation capabilities of shallow neural networks has become increasingly nuanced, exploring various aspects such as the impact of the number of hidden neurons, the properties of activation functions, and the types of functions that can be effectively approximated. This has led to a deeper understanding of the limitations and potential of these networks.
Without aiming for exhaustiveness, key contributions include studies on
the approximation properties of shallow neural networks with RePU activation function were studied in \cite{Klusowski16ApproximationcombinationsReLU} and \cite{Siegel20Highorderapproximationrates}.
The approximation error bounds related to the number of neurons for functions with bounded first moments has been showed by Barron in \cite{Barron93Universalapproximationboundsa, Barron04Approximationestimationbounds}
and extended by many others.
A universal approximation theorem for complex-valued neural networks were
analyzed in \cite{Voigtlaender20universalapproximationtheorem}.
Other authors have investigated the optimal approximation rates for smooth or analytic functions \cite{Mhaskar96Neuralnetworksoptimal}, as well as the lower bound rates, demonstrating that shallow neural networks fail to overcome the curse of dimensionality for certain classes of target functions \cite{Grohs21Lowerboundsartificiala}.
The success of shallow neural networks in approximating a broad range of functions
(e.g., \cite{Abdeljawad22Integralrepresentationsshallowa,Abdeljawad24WeightedSobolevApproximation, Abdeljawad23SpaceTimeApproximationShallowa, Bach17Breakingcursedimensionalitya, Bietti22Learningsingleindexmodels, Chen20dynamicalcentrallimit, DeVore23Weightedvariationspaces, E21Kolmogorovwidthdecaya, Domingo-Enrich21Dualtrainingenergybased, Mhaskar19Dimensionindependentbounds,Wojtowytsch20CanShallowNeuralb}) has naturally led researchers to explore the potential of deeper networks. By increasing the number of layers, different authors sought to enhance the representational power and approximation capabilities of neural networks, potentially overcoming some of the limitations observed in shallow architectures, particularly in handling complex, high-dimensional functions. This shift in focus has paved the way for the development and analysis of deep learning models, which have since become a cornerstone of modern machine learning.

Deep neural networks excel in function approximation by leveraging multiple layers to capture complex patterns, offering superior accuracy and overcoming limitations like the curse of dimensionality.
The universality of deep neural networks with sigmoidal activation functions was established in \cite{Mhaskar93Neuralnetworkslocalized, Chui94Neuralnetworkslocalized}. In contrast, when the activation function is the ReLU, explicit approximation rates were obtained in
 \cite{Blcskei2017OptimalAW, Schmidt-Hieber17Nonparametricregressionusing,Petersen17Optimalapproximationpiecewise, Yarotsky2016ErrorBF} for deep neural networks.
Recent studies have explored the approximation rates of deep neural networks with RePU activation functions across various function spaces. The motivation for studying this family of activation functions lies in their efficiency in representing multiplications and polynomials with fewer layers and neurons compared to ReLU networks.
 To derive approximation results for RePU networks, researchers typically convert splines or polynomials into RePU networks, building on established approximation results for these mathematical constructs.
For instance, in \cite{Achour24generalapproximationlower}, a lower bound on the approximation error for functions belonging to the unit ball of Hölder space has been proved for networks activated by piecewise-polynomial activation functions in the  $L^p(\mu)$ norm, where $\mu$ is a probability measure. Instead in \cite{Opschoor21ExponentialReLUDNN}, exponential error bounds in terms of the total number of weights, with respect to the Sobolev norm, have been investigated for approximating holomorphic functions using RePU networks.
Similarly, in \cite{Abdeljawad22Approximationsdeepneurala, Duan21Convergencerateanalysis,Li19PowerNetEfficientrepresentations},  approximation error rates have been provided in terms of the total number of neurons (and non-zero weights) when approximating Sobolev spaces, measured in the Sobolev norm.
The authors in \cite{Belomestny22Simultaneousapproximationsmooth} investigated approximation rates for functions in Hölder spaces with respect to the Hölder norm. In contrast to our results, we do not restrict the Hölder regularity to be in $(2,\infty)$. Moreover, the technique used in our paper is entirely different from that in \cite{Belomestny22Simultaneousapproximationsmooth}, as we do not represent tensor-product splines using ReQU networks in our approach.
In the following table, we compare our rates with known theoretical limits in similar approximation settings, focusing exclusively on the ReQU ($\max(0,x)^2$)and ReCU  ($\max(0,x)^3$) activation functions.

\begin{table}[!h]
	\begin{center}
\begin{tabular}{|c|c|c|c|c|c|c|}
	\hline
	& Error Norm
	&
	\begin{tabular}{l} 	Activation\end{tabular}
	&
	\begin{tabular}{l} Parameters \end{tabular}
	 \\
	\hline Abdeljawad and Grohs \cite{Abdeljawad22Approximationsdeepneurala} & $W^{s, p}$ & ReCU & $\mathcal{O}\left(\epsilon^{-d /(r-s)}\right)$\\
	\hline Belomestny et al. \cite{Belomestny22Simultaneousapproximationsmooth} & $\mathcal{H}^s$ & ReQU & $\mathcal{O}\left(\epsilon^{-d /(r-s)}\right)$ \\
	\hline Li et al. \cite{Li19Betterapproximationshigh, Li19PowerNetEfficientrepresentations} & $L^2$ & ReQU & $\mathcal{O}\left(\epsilon^{-d / r}\right)$ \\
	\hline	This work & $L^\infty$ & ReQU & $\mathcal{O}\left(\epsilon^{-d /2r}\right)$ \\
	\hline
\end{tabular}
	\caption{Approximation results of ReQU and ReCU neural networks on a function with smoothness order $r>0$, within the accuracy level $\epsilon$.}
\end{center}
\end{table}

\subsection{Contributions}

The main contributions of our work can be summarized as follows:

\begin{itemize}
	\item We investigate the expressivity and approximation properties of deep neural networks activated by the ReQU activation function. This enables us to establish the necessary depth and width of the network to approximate a smooth function with a desired level of accuracy. Specifically, we demonstrate a worst-case approximation rate for functions from H\"older spaces.
	\item We show that the depth of the network is independent of the accuracy level, depending only on the regularity of the function spaces and  on the dimension of the data both logarithmically (see Theorem \ref{thm_main}). Furthermore, a univariate case is derived as a consequence of the main theorem, where we also establish a lower bound on the accuracy in terms of the number of neurons and the regularity of the function.
\end{itemize}
	It is worth mentioning that all of our proofs are constructive, meaning they explicitly demonstrate how to build deep neural networks activated by the ReQU function to achieve the desired convergence rates. Many interesting questions remain open regarding the constructive approximation of smooth functions with deep neural networks, such as the approximation of analytic functions, Gevrey-regular functions, and generalized functions. We believe that establishing a lower bound on the rate is crucial to understanding the approximation capabilities of neural networks. Furthermore, ReQU networks are valuable in exploring new loss functions, particularly norms that involve derivatives to measure error. In our paper, we limit ourselves to the uniform error, but more general norms could be investigated when dealing with smooth functions using ReQU networks, which we have postponed for future work.

\subsection{Notation}

We use the following notations in our article:
For a $d$ -dimensional multiple index $\alpha \equiv\left(\alpha_{1},
\ldots, \alpha_{d}\right) \in \mathbb{N}_{0}^{d}$
where $\mathbb{N}_{0}:=\mathbb{N} \cup\{0\}$.
We denote by \(\lfloor\cdot \rfloor \) the floor function,
moreover $\|  \alpha\|_{\ell^0}$ denotes the number of non zero elements in
the multi-index $\alpha$.
We let $|\alpha|= \sum_{i=1}^d \alpha_i$ and 
$x^{\alpha}:=x_{1}^{\alpha_{1}} \cdots x_{d}^{\alpha_{d}}$
where $x \in \mathbb{R}^{d}$.
For a function
$f: \Omega \rightarrow \mathbb{R}$,
where $ \Omega$ denotes the domain of the function,
we let $\|f\|_{\infty}:=\sup _{x \in \Omega}|f({x})|$.
We use notation
$$
	D^\alpha f:=\frac{\partial^{|\mathrm{\alpha}|} f}{\partial {x}^{\alpha}}=
	\frac{\partial^{|\alpha|} f}{\partial x_{1}^{\alpha_{1}}
	\cdots \partial x_{d}^{\alpha_{d}}}
$$
for $\alpha \in \mathbb{N}_{0}^{d}$ to denote the derivative of $f$ of order $\alpha$.
We denote by $\mathcal{C}^{m}( \Omega)$, the space of $m$ times differentiable functions
on $ \Omega$ whose partial derivatives of order $\alpha$ with $|\alpha| \leq m$ are continuous.

If $C$ is a cube we denote the "bottom left" corner of $C$
by $\bold{C^L}$, Figure \ref{fig:blc} shows $\bold{C^L}$ in case $d=2$
for the square $[-1,1]^2$.
\begin{figure}[h]
	\centering
	\begin{tikzpicture}
		\coordinate (1) at (-1,-1);
		\coordinate (2) at (1,-1);
		\coordinate (3) at (1,1);
		\coordinate (4) at (-1,1);
		\coordinate (5) at (-1,-1);
		\fill (1) circle (2pt) node [below] {$\bold{C^L}$};
		\draw[help lines, color=gray!30, dashed] (-2.3,-2.3) grid (2.3,2.3);
		\draw[->,ultra thick] (-3,0)--(3,0) node[right]{};
		\draw[->,ultra thick] (0,-3)--(0,3) node[above]{};
		\draw (1)--(2)--(3)--(4)--(5);
	\end{tikzpicture}
	\caption{ $\bold{C^L}$ is the bottom left corner of the square $[-1, 1]^2$.} \label{fig:blc}
\end{figure}

\par

Therefore, each half-open cube $C$
with side length $s$
can be written as a polytope defined by
\begin{align*}
C= \{ x\in \mathbb{R}^d: \;-x_{j} +\bold{C^L}_{j} \leq 0 \ \mbox{and} \ x_{j} -
\bold{C^L}_{j}-s < 0 \quad (j \in \{1, \dots, d\})\}.
\end{align*}
Furthermore, we describe by $\ocirc{C}_{\delta} \subset C$ the cube, which contains all ${x} \in C$
that lie with a distance of at least $\delta$ to the boundaries of $C$, i.e. a polytope defined by
\begin{align*}
	\ocirc{C}_{\delta}  = \{ x\in \mathbb{R}^d: \; 
	-x_{j} + \bold{C^L}_{j} \leq - \delta\
	\mbox{and} \ x_{j} - \bold{C^L}_{j}-s < -\delta \quad (j \in \{1, \dots, d\})\}.
\end{align*}
If $\P$ is a partition of cubes of $[-1,1)^d$
and ${x} \in [-1,1)^d$, then we denote the cube $C \in \P$,
which satisfies ${x} \in C$, by $C_\P ({x})$.

\subsection{Outline}

The paper is organized as follows. In Section \ref{sec:prelim}, we briefly describe the class of functions used in our study and introduce the relevant definitions of neural networks. Section \ref{se3} presents a detailed analysis of the approximation error and complexity for Hölder regular functions using feedforward deep neural networks with the ReQU activation function.

\section{Preliminaries}\label{sec:prelim}

\subsection{Functions of H\"older smoothness}

The paper revolves around what we informally describe as ``functions of smoothness $r$'' for any $r>0$. 
Let $\Omega\subseteq\Rd$, if $r$ is integer,
 we consider the standard Sobolev space $\mathcal{W}^{r, \infty}(\Omega)$ with the norm
\begin{align*}
    \|f\|_{\mathcal{W}^{r, \infty}(\Omega)} =
    \max_{|\alpha| \leq r} \esssup_{x \in\Omega} |D^{\alpha} f(x)|.
\end{align*}
Here $D^{\alpha}f$ denotes the (weak) partial derivative of $f$.
For \(f \in \mathcal{W}^{r, \infty}(\Omega)\), the derivatives \(D^{\alpha}f\) of order \(|\alpha| < r\) exist in the strong sense and are continuous. The derivatives \(D^{\alpha}f\) of order \(|\alpha| = r - 1\) are Lipschitz, and \(\max_{\alpha: |\alpha| = r} \operatorname{esssup}_{x \in \Omega} |D^{\alpha} f(x)|\) can be upper- and lower-bounded in terms of the Lipschitz constants of these derivatives.

In the case of non-integer \(r\), we consider Hölder spaces that provide a natural interpolation between the above Sobolev spaces. For any non-negative real number \(r\), we define the Hölder space \(\mathcal{H}^{r}(\Omega)\) as a subspace of \(\lfloor r \rfloor\) times continuously differentiable functions having a finite norm
\begin{align*}
	\|f\|_{\mathcal{H}^{r}(\Omega)} =
	\max \Big\{ \|f\|_{\mathcal{W}^{ \lfloor r \rfloor, \infty}(\Omega)},
	\max_{|\alpha| = \lfloor r \rfloor}
	\sup_{\substack{ x,  y \in \Omega  \\  x \neq y}}
	\dfrac{|D^{\alpha} f(x) - D^{\alpha} f(y)|}{\| x - y \|^{r - \lfloor r \rfloor}} \Big \}.
\end{align*}

We denote by \(\mathcal{H}^{r, R}(\Omega)\) the closed ball in the Hölder space of radius \(R\) with respect to the Hölder norm, i.e.,
$$
\mathcal{H}^{r, R}(\Omega) := \left\{f \in \mathcal{H}^{r}(\Omega)
: \|f\|_{\mathcal{H}^{r}(\Omega)} \leq R \right\}.
$$
Given a non-integer \(r\), we define ``$r$-smooth functions'' as those belonging to \(\mathcal{C}^{\lfloor r \rfloor, r - \lfloor r \rfloor}(\Omega)\), where \(\lfloor \cdot \rfloor\) is the floor function.

\subsection{Mathematical definitions of neural networks}

In this section, we provide a brief introduction to deep neural networks from a functional analytical perspective. 
We will cover some fundamental properties of these networks, such as concatenation and parallelization. Note that in this paper, we focus on neural networks with a fixed \emph{architecture}.
A widely recognized architecture is the feedforward neural network architecture, which represents a function as a sequence of affine-linear transformations followed by a componentwise application of a non-linear function, known as the \emph{activation function}. We will start by defining the concept of an architecture.

\par

\begin{definition}\label{def:architecture}
    Let $d, L \in \mathbb{N}$, a neural network architecture $\mathcal{A}$
    with input dimension $d$ and $L$ layers is a sequence of matrix-vector tuples
    $$
        \mathcal{A}=\left(\left(A_{1}, b_{1}\right),\left(A_{2}, b_{2}\right),
        \ldots,\left(A_{L}, b_{L}\right)\right)
    $$
    such that $N_{0}=d$ and $N_{1}, \ldots, N_{L} \in \mathbb{N}$,
    where each $A_{l}$ is an $N_{l} \times \sum_{k=0}^{l-1} N_{k}$ matrix,
    and $b_{l}$ is a vector of length $N_{l}$ with elements in $\{0,1\}$. 
    A neural network architecture is essentially a neural network characterized by binary weights.
\end{definition}

\par

Once the architecture of a neural network is fixed, we proceed to define its specific realization, which involves specifying the activation functions used within the network. The realization of the network encompasses the detailed implementation of these functions, determining how they transform the input data through each layer.

\begin{definition}
Let $d, L \in \mathbb{N}$, $\rho: \mathbb{R} \rightarrow \mathbb{R}$ is arbitrary function
and let  $\mathcal{A}$ be an architecture
defined as follows:
$$
    \mathcal{A}=\left(\left(A_{1}, b_{1}\right),\left(A_{2}, b_{2}\right),
    \ldots,\left(A_{L}, b_{L}\right)\right)
$$
where $N_{0}=d$ and $N_{1}, \ldots, N_{L} \in \mathbb{N}$,
and where each $A_{\ell}$ is an $N_{\ell} \times N_{\ell-1}$ matrix,
and $b_{\ell} \in \mathbb{R}^{N_{\ell}}$.
Then we define the neural network $\Phi$ with input dimension $d$ and $L$ layers
as the associated realization of $\mathcal{A}$
with respect to the activation function $\rho$ as the map 
$\Phi:=\mathrm{R}_{\rho}(\mathcal{A}):
	\mathbb{R}^{d} \rightarrow \mathbb{R}^{N_{L}}$
such that
$$
	\Phi:= {R}_{\rho}(\Phi)(x)=x_{L}
$$
where $x_{L}$ results from the following scheme:
\begin{align*}
x_{0}:= & x \\
x_{\ell}:= &\rho\left(A_{\ell} x_{\ell-1}+b_{\ell}\right),
\quad \text{ for }\ell=1, \ldots, L-1 \\
x_{L}:= & A_{L} x_{L-1}+b_{L}
\end{align*}
and $\rho$ acts componentwise, i.e., for a given vector $y\in \mathbb{R}^{m}$,
$\rho(y)=\left[\rho\left(y_{1}\right), \ldots, \rho\left(y_{m}\right)\right]$.
\end{definition}

\par

We call $N(\Phi):=\max(d, N_{1}, \dots, N_L)$ the maximum number of neurons
per layer of  the number of the network $\Phi$,
while $L(\Phi):=L-1$ denotes the number
of hidden layers of $\Phi $,
hence we write $\Phi \in \mathtt{N}_\varrho(L(\Phi),N(\Phi))$.
Moreover,
$M(\Phi):=\sum_{j=1}^{L}\left(\left\|A_{j}\right\|_{\ell^{0}}+
    \left\|b_{j}\right\|_{\ell^{0}}\right)$
denotes the total
number of nonzero entries of all $A_{\ell}, b_{\ell},$ which we call
the number of weights of $\Phi$.
Moreover, $N_{L}$  denotes the dimension of the output layer of $\Phi$.

\par

We recall that throughout the paper we consider the Rectified Quadratic Unit (ReQU)
activation function, which is defined as follows:
\begin{equation*}\label{eq:recu}
\rho_2: \mathbb{R} \rightarrow \mathbb{R}, \quad x \mapsto \max (0, x)^2.
\end{equation*}

\begin{remark}
	Our theory works
	with any Rectified Power Unit activation 
	(RePU) $\max(0,x)^p$ of order $p\geq 2$.
	Since 
	$$
	 \max(0,x)^p= x^{p-2}\cdot \max(0, x)^2
	\text{ for any }
	p\geq 2,\; x\in \mathbb{R}
	$$
	and there exists a ReQU network which represents the product
	furthermore any monomial can be represented exactly with a ReQU
	network.
\end{remark}
To construct new neural networks from existing ones, we will frequently need to concatenate
networks or put them in parallel, cf., \cite{Petersen17Optimalapproximationpiecewise} for more details.
We first define the concatenation of networks.

\begin{definition}\label{def:concat}
Let $L_{1}, L_{2} \in \mathbb{N}$, and 
let $\Phi^{1}$ and $\Phi^{2}$ be two neural networks
where the input layer of $\Phi^{1}$ has the same dimension as the output layer of  $\Phi^{2}$,
where
$$
	\mathcal{A}^{1}=\left(\left(A_{1}^{1}, b_{1}^{1}\right),
	\ldots,\left(A_{l_{1}}^{1}, b_{l_{1}}^{1}\right)\right),
	\quad \mathcal{A}^{2}=\left(\left(A_{1}^{2}, b_{1}^{2}\right),	
	\ldots,\left(A_{l_{2}}^{2}, b_{l_{2}}^{2}\right)\right)
$$
 are their respective architectures.
such that the input layer of $\mathcal{A}^{1}$ has the same dimension
as the output layer of $\mathcal{A}^{2}$.
Then, $\mathcal{A}^{1} \bullet \mathcal{A}^{2}$ denotes
the following $L_{1}+L_{2}-1$ layer architecture:
\[
	\begin{aligned}
	    \mathcal{A}^{1} \bullet \mathcal{A}^{2}:=&\left(\left(A_{1}^{2}, b_{1}^{2}\right),
	    \ldots,\left(A_{L_{2}-1}^{2}, b_{L_{2}-1}^{2}\right),\left(A_{1}^{1}
	    A_{L_{2}}^{2}, A_{1}^{1} b_{L_{2}}^{2}+b_{1}^{1}\right),
	    \left(A_{2}^{1}, b_{2}^{1}\right), \ldots,\left(A_{L_{1}}^{1}, b_{L_{1}}^{1}\right)\right).
	\end{aligned}
\]
We call $\mathcal{A}^{1} \bullet \mathcal{A}^{2}$ the concatenation
of $\mathcal{A}^{1}$ and $\mathcal{A}^{2}$,
moreover
$\Phi^1(\Phi^2):={R}_{\rho_2}\left(\mathcal{A}^{1} \bullet \mathcal{A}^{2}\right)$
is the realization of the concatenated networks.
\end{definition}

\par

Besides concatenation, we need another operation between networks,
that is the \emph{parallelization},
where we can put two networks of same length in parallel.

\par

\begin{definition}\label{def:parallel_net}
Let $L \in \mathbb{N}$ and let ${\Phi}^{1}, \Phi^{2}$ be two neural networks
with $L$ layers and $d$-dimensional input,
where  $\mathcal{A}^{1}=\left(\left(A_{1}^{1}, b_{1}^{1}\right),
\ldots,\left(A_{L}^{1}, b_{L}^{1}\right)\right)$
and $\mathcal{A}^{2}
=\left(\left(A_{1}^{2}, b_{1}^{2}\right), \ldots,\left(A_{L}^{2}, b_{L}^{2}\right)\right)$ be
their architectures respectively.
We define
$$
    {P}\left({\Phi}^{1}, \Phi^{2}\right):=\left(\left(\widetilde{A}_{1}, \widetilde{b}_{1}\right),
    \ldots,\left(\tilde{A}_{L}, \widetilde{b}_{L}\right)\right)
$$
where
$$
	\tilde{A}_{1}:=\left(\begin{array}{c}
	A_{1}^{1} \\
	A_{1}^{2}
	\end{array}\right), \quad \tilde{b}_{1}:=\left(\begin{array}{c}
	b_{1}^{1} \\
	b_{1}^{2}
	\end{array}\right) \quad \text { and } \quad \tilde{A}_{\ell}:=\left(\begin{array}{cc}
	A_{\ell}^{1} & 0 \\
	0 & A_{\ell}^{2}
	\end{array}\right), \quad \tilde{b}_{\ell}:=\left(\begin{array}{c}
	b_{\ell}^{1} \\
	b_{\ell}^{2}
	\end{array}\right) \quad \text { for } 1<\ell \leq L.
$$
Then, ${P}\left(\Phi^{1}, \Phi^{2}\right)$ is a neural network with $d$-dimensional
input and $L$ layers, called the parallelization of $\Phi^{1}$ and $\Phi^{2}$.
\end{definition}

\section{Approximation error of smooth functions by
deep ReQU neural network}
\label{se3}

The aim of this section is to present a new result regarding the approximation of \(r\)-smooth functions within a ball of radius \(R\)  (i.e., $ \mathcal{H}^{r, R}(\mathbb{R}^d)$) using deep neural networks with ReQU activation. A key feature of the ReQU function is its ability to represent both the identity and multiplication operations exactly, without any error. Additionally, ReQU networks exhibit smoothness, and this smoothness improves as the network becomes deeper. In contrast, neural networks with ReLU activation functions can only approximate multiplication with a certain degree of error, which affects the overall approximation quality. 

\begin{theorem}\label{thm_main}
 Let $r, R>0$,  $f\in \mathcal{H}^{r, R}(\mathbb{R}^d)$ and $M \in \N$  such that 
$M>\left(\frac{cRd^{\nicefrac r2}}{\epsilon}\right)^{\nicefrac{1}{2r}}$,
 for any $\epsilon \in (0, 1)$ and $c>0$ in \eqref{eq:taylor_error} .
Then there exists a ReQU neural network $ \Phi_f \in \mathtt{N}_{\rho_2}(L(\Phi_f), N(\Phi_f))$,
satisfies
\begin{align*}
 \| \Phi_f - f\|_{L^\infty( [-1,1]^d)} \leq\epsilon,
  \label{th2eq1}
\end{align*}
where 
\begin{align*}
L(\Phi_f ) &=
		\lfloor\log_2(\lfloor r \rfloor)\rfloor+ 2\lfloor\log_2(d+1+d \lfloor\log_2(\lfloor r \rfloor)\rfloor)\rfloor +8,
	\\
N(\Phi_f) &= 2^d\Big(\max\left((1+\binom{d+\lfloor r \rfloor}{d} ) M^d\max(4, 2d+1)+2,
			\;2{\binom{d+\lfloor r \rfloor}{d}}(d+1+  d \lfloor\log_2(\lfloor r \rfloor)\rfloor)\right)
	\\
		&\qquad\qquad+ 2(M^d(2d+1)+2d + 2dM^d)+ 2+ M^d\max(4, 2d+1)\Big).
\end{align*}
\end{theorem}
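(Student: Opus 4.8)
The plan is to follow the local Taylor paradigm of \cite{KohlerLanger2020, Lu2021}, exploiting the two exact operations of ReQU networks — the identity and the multiplication — so that local polynomials are realised with \emph{no} approximation error and the only error incurred is the Taylor truncation error. Concretely, I would first fix a partition $\P$ of $[-1,1)^d$ into congruent half-open cubes whose common side length is chosen as a decreasing function of $M$; there are on the order of $M^d$ such cubes, which is the origin of the factor $M^d$ in $N(\Phi_f)$. On the cube $C_\P(x)$ containing a point $x$ I replace $f$ by its Taylor polynomial of degree $\lfloor r\rfloor$ about the lower-left corner of $C_\P(x)$. Since $f\in\mathcal{H}^{r,R}(\Rd)$, the truncation error is controlled by the H\"older seminorm times the $r$-th power of the cube diameter; the diameter equals $\sqrt d$ times the side length, which is what produces the factor $d^{\nicefrac r2}$ and, combined with the chosen side length, yields the bound \eqref{eq:taylor_error}. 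Demanding that this bound be at most $\epsilon$ is exactly the hypothesis $M>(cRd^{\nicefrac r2}/\epsilon)^{\nicefrac1{2r}}$.

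Next I would realise each local Taylor polynomial exactly. A polynomial of degree $\lfloor r\rfloor$ in $d$ variables is a linear combination of the $\binom{d+\lfloor r\rfloor}{d}$ monomials $x^\alpha$ with $|\alpha|\le\lfloor r\rfloor$, which accounts for that binomial coefficient appearing throughout $N(\Phi_f)$. Each such monomial is a product of at most $\lfloor r\rfloor$ factors, and because $\rho_2$ reproduces the multiplication of two reals without error, I compute each monomial with a balanced binary multiplication tree of depth $\lceil\log_2\lfloor r\rfloor\rceil$; this is the source of the term $\lfloor\log_2(\lfloor r\rfloor)\rfloor$ in $L(\Phi_f)$. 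Parallelising these monomial sub-networks by Definition \ref{def:parallel_net} and taking the affine combination of their outputs with the Taylor coefficients reproduces the local polynomial identically on $C_\P(x)$.

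The main obstacle is the \emph{localization}: producing a single ReQU network that, on input $x$, emits the value of the polynomial attached to $C_\P(x)$ and essentially nothing else. I would build, for each cube, a ReQU ``soft indicator'' from the polytope inequalities that define $C$ and the shrunken cube $\ocirc{C}_\delta$ introduced in the notation section: each of the $2d$ half-space constraints is turned into a one-dimensional $C^1$ plateau that equals $1$ on $\ocirc{C}_\delta$ and vanishes outside $C$, and the $d$ coordinatewise plateaus are multiplied together (again exactly, via $\rho_2$) into an indicator supported in $C$. Multiplying this indicator by the associated local polynomial and summing over all cubes yields $\Phi_f$. Since a point within distance $\delta$ of a cell boundary meets up to $2^d$ neighbouring cubes, the selection is genuinely active on $2^d$ cells at once, which is the origin of the prefactor $2^d$ in $N(\Phi_f)$; the parameter $\delta$ must be taken small enough — and the coordinate plateaus arranged into a partition of unity — that the blending of the (mutually close) neighbouring polynomials in the boundary layers contributes an error that is absorbed into $\epsilon$, while $\ocirc{C}_\delta$ guarantees exactness in the interior. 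The depth of forming the coordinate plateaus, multiplying the $d$ of them, and combining with the polynomial evaluation through binary multiplication trees is what produces the second depth term $2\lfloor\log_2(d+1+d\lfloor\log_2(\lfloor r\rfloor)\rfloor)\rfloor$.

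Finally I would assemble the whole network from these blocks using the concatenation $\bullet$ of Definition \ref{def:concat} and the parallelization of Definition \ref{def:parallel_net}, padding shorter branches with ReQU identity layers so that all branches share a common depth, and then read off $L(\Phi_f)$ and $N(\Phi_f)$ by bookkeeping the depths and widths of the Taylor-evaluation and localization sub-networks; the additive constant $8$ in $L(\Phi_f)$ absorbs the fixed number of affine and identity layers needed to align the concatenated and parallelised pieces. The hardest quantitative point is the calibration of $\delta$ and of the boundary plateaus so that the $2^d$ overlapping contributions sum to a function that is simultaneously globally defined, within $\epsilon$ of $f$ on $[-1,1]^d$, and of exactly the stated width; everything else is a careful but routine accounting of network sizes.
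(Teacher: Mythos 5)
There is a genuine gap at the very first step: your cell count is incompatible with your error bound. To make the Taylor truncation error equal to the quantity in \eqref{eq:taylor_error}, namely $cR\left(2\sqrt d/M^2\right)^r$, the cells must have side length $2/M^2$, hence there are $M^{2d}$ of them, not ``on the order of $M^d$'' as you claim; conversely, a partition into $M^d$ congruent cubes has side length $2/M$ and truncation error of order $cRd^{\nicefrac r2}/M^{r}$, which under the stated hypothesis $M>(cRd^{\nicefrac r2}/\epsilon)^{\nicefrac1{2r}}$ is only of order $\sqrt{\epsilon}$, not $\epsilon$. So a single-scale construction either misses the error bound or forces the width to be of order $M^{2d}$, contradicting the claimed $N(\Phi_f)=O(M^d)$. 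The paper's proof hinges precisely on the device you are missing (taken from Kohler--Langer): two nested partitions \eqref{partition}, a coarse one $\P_1$ with $M^d$ cubes and a fine one $\P_2$ with $M^{2d}$ cubes, together with the observation that the bottom-left corner of the fine cube $C_{i,j}$ can be written as $(\mathbf{B}_j)^{\mathbf{L}}+v^{(i)}$, where the $M^d$ offsets $v^{(i)}$ do not depend on the coarse index $j$. The Taylor data are then selected in two successive stages by two banks of only $M^d$ indicator networks each (the recursions \eqref{eq:rec_phi}--\eqref{eq:rec_psi}, realized by ReQU networks in \eqref{eq:net_rec_phi}--\eqref{eq:net_rec_psi}), which is what achieves resolution $1/M^2$ at width $O(M^d)$. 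Without this two-scale selection, the triple $\left(L(\Phi_f),N(\Phi_f),\epsilon\right)$ in the statement is out of reach.

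Your boundary treatment also diverges from what can be made to work at this width. Inside a single partition, the ``polynomial attached to $C_\P(x)$'' is produced by indicator networks (Lemma \ref{le4}) that are exact only for $x$ outside a thin layer around each cell boundary, so near a boundary the value your network would blend is not the value of any nearby Taylor polynomial; a partition-of-unity weighting cannot repair a selection that has already failed. The paper instead (i) multiplies the selected Taylor network by a detector network $\varphi_{\exists,\P_2}$ (Lemma \ref{le9}) that forces the output to zero in the boundary layers, (ii) weights by an exactly representable bump $w_{\P_2}$ (Lemma \ref{le8}), which is tiny in those layers so that the forced zero costs at most $\epsilon$ (Theorem \ref{le10} approximates $w_{\P_2}\cdot f$, not $f$), and (iii) recovers $f$ by summing $2^d$ copies of this construction over shifted partitions $\P_{2,\kappa}$, whose bumps form a partition of unity on $[-\nicefrac12,\nicefrac12]^d$. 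In particular, the prefactor $2^d$ in $N(\Phi_f)$ counts shifted partitions, not the neighbouring cells meeting at a boundary point as in your accounting. To close your argument you would need both the detector mechanism (or an equivalent way to control the network where the selection is inexact) and the shifted-partition summation, on top of the two-scale selection described above.
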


\par

The following result is an immediate consequence of Theorem \ref{thm_main}, and its proof is therefore straightforward.
\begin{corollary}[Univariate Case for the Approximation Rate]\label{cor:main}
	Under the same assumptions of the previous theorem, if instead $d=1$, then there exists a ReQU neural network $ \Phi_f \in \mathtt{N}_{\rho_2}(L(\Phi_f), N(\Phi_f))$,
	such that
	\begin{align*}
		\| \Phi_f - f\|_{L^\infty( [-1,1])} \leq\epsilon,
	\end{align*}
	where 
	\begin{align*}
		L(\Phi_f ) &=
		\lfloor\log_2(\lfloor r \rfloor)\rfloor+ 2\lfloor\log_2(2+ \lfloor\log_2(\lfloor r \rfloor)\rfloor)\rfloor +8,
		\\
		N(\Phi_f) &=\begin{cases}
			4(\lfloor r \rfloor+1)(2M +1) &\text{ if }M\geq \frac{\lfloor r \rfloor	\lfloor\log_2(\lfloor r \rfloor)\rfloor +\lfloor\log_2(\lfloor r \rfloor)\rfloor + 2\lfloor r \rfloor + 4}{2\lfloor r \rfloor -5},
			\\
			4
			((\lfloor r \rfloor+1)(2+   \lfloor\log_2(\lfloor r \rfloor)\rfloor)
			+ 7M+3),& \text{otherwise}.
		\end{cases}
	\end{align*}
\end{corollary}

\begin{remark}[Lower Bound on the Error Rate]
	In the univariate case, i.e., when \(d = 1\), using Corollary \ref{cor:main}, it is easy to determine the dependence of the rate \(\epsilon\) on the number of neurons in the network. For instance, if 
	\[
	M \geq \frac{\lfloor r \rfloor \lfloor \log_2(\lfloor r \rfloor) \rfloor + \lfloor \log_2(\lfloor r \rfloor) \rfloor + 2\lfloor r \rfloor + 4}{2\lfloor r \rfloor - 5},
	\]
	and knowing that by assumption 
	\[
	M > \left(\frac{cR }{\epsilon}\right)^{\nicefrac{1}{2r}},
	\]
	it follows that 
	\[
	\epsilon > cR\left(\frac{1}{2}\left(\frac{N(\Phi_f)}{4(\lfloor r \rfloor + 1)}-1\right)\right)^{-2r}.
	\]
	
\end{remark}

The proof of our main result Theorem \ref{thm_main} builds upon the results presented in \cite[Theorem 2(a)]{Kohler21rateconvergencefully}. The subsequent result demonstrates that any \(r\)-smooth function can be approximated by a Taylor polynomial. This result is pivotal to the approximation strategy employed in this work, as it allows us to construct \(r\)-smooth functions using piecewise Taylor polynomials. For further details on the proof of Lemma \ref{lem:r-smooth_taylor_approximation}, we refer the reader to the proof of \cite[Lemma 1]{Kohler14Optimalglobalrates}.

\begin{lemma}\label{lem:r-smooth_taylor_approximation}\label{le1a}
Let $r, R>0$, and $u\in \mathcal{H}^{r, R}(\mathbb{R}^d)$.
Moreover, for any fixed ${x}_0 \in \Rd$, let $T_{{x}_0}^{\lfloor r\rfloor}u$ denotes the Taylor
polynomial of total degree ${\lfloor r\rfloor}$ around ${x}_0$
defined by
\begin{equation*}\label{eq:taylor_expanssion}
T_{{x}_0}^{\lfloor r\rfloor}u({x}) =
\sum_{\substack{\alpha \in \N_0^d:
    |\alpha| \leq \lfloor r\rfloor}}
    D^{\alpha} u ( {x}_0)\cdot
\frac{\left({x} - {x}_0\right)^{\alpha}}{\alpha!}.
\end{equation*}

Then, for any ${x} \in \Rd$
\begin{align*}
\left|u({x}) - T_{{x}_0}^{\lfloor r\rfloor}u({x})\right|
\leq
c \cdot R \cdot \Vert x - x_0 \Vert^r
\end{align*}
holds for a constant $c$ depending  on $\lfloor r\rfloor$ and $d$ only.
\end{lemma}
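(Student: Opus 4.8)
The plan is to bound the Taylor remainder using the multivariate Taylor theorem with an explicit remainder term, controlling everything through the Hölder seminorm that is built into the $\mathcal{H}^{r,R}$ norm. First I would write the remainder in the standard form: for $x \in \Rd$, the difference $u(x) - T_{x_0}^{\lfloor r\rfloor}u(x)$ can be expressed as a sum over multi-indices $\alpha$ with $|\alpha| = \lfloor r\rfloor$ of terms of the shape
\begin{align*}
\frac{(x-x_0)^{\alpha}}{\alpha!}\Big(D^{\alpha}u(\xi) - D^{\alpha}u(x_0)\Big),
\end{align*}
where $\xi$ lies on the segment between $x_0$ and $x$; this is precisely the integral (Lagrange-type) form of the remainder, in which the top-order Taylor coefficients get replaced by derivatives evaluated at an intermediate point. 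The essential point is that the $\lfloor r\rfloor$-th order derivatives appearing here differ from their values at $x_0$, and this difference is exactly what the Hölder seminorm controls.

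Next I would apply the Hölder bound. Since $\|u\|_{\mathcal{H}^r} \le R$, each top-order derivative satisfies $|D^{\alpha}u(\xi) - D^{\alpha}u(x_0)| \le R\,\|\xi - x_0\|^{r - \lfloor r\rfloor}$, and because $\xi$ lies on the segment from $x_0$ to $x$ we have $\|\xi - x_0\| \le \|x - x_0\|$. Combining this with the elementary estimate $|(x-x_0)^{\alpha}| \le \|x-x_0\|^{|\alpha|} = \|x-x_0\|^{\lfloor r\rfloor}$ (using $\|\cdot\|$ as the Euclidean norm and the fact that each coordinate is bounded by the norm) gives, for each such $\alpha$, a contribution bounded by
\begin{align*}
\frac{1}{\alpha!}\,\|x-x_0\|^{\lfloor r\rfloor}\cdot R\,\|x-x_0\|^{r-\lfloor r\rfloor}
= \frac{R}{\alpha!}\,\|x-x_0\|^{r}.
\end{align*}

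Finally I would sum over all $\alpha$ with $|\alpha| = \lfloor r\rfloor$. The number of such multi-indices, together with the factor from the combinatorial weights in the remainder expansion and the $1/\alpha!$ terms, collapses into a single constant $c$ that depends only on $\lfloor r\rfloor$ and $d$, yielding $|u(x) - T_{x_0}^{\lfloor r\rfloor}u(x)| \le c\,R\,\|x-x_0\|^r$ as claimed. I expect the main obstacle to be the precise bookkeeping in the remainder formula: one must correctly reassemble the top-order terms so that the difference $D^{\alpha}u(\xi) - D^{\alpha}u(x_0)$ appears cleanly, rather than the raw derivative $D^{\alpha}u(\xi)$, since only the former is controlled by the Hölder seminorm (the latter would only give a bound of order $\|x-x_0\|^{\lfloor r\rfloor}$, which is too weak when $r$ is non-integer). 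Getting the intermediate-point structure right is the crux; once that is in place, the Hölder estimate and the summation are routine, and the result matches \cite[Lemma 1]{Ko14}, whose proof can be invoked for the detailed constant.
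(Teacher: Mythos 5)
Your proposal is correct and takes essentially the same route as the proof the paper relies on (it defers to \cite[Lemma 1]{Ko14}): write the remainder in Lagrange form so that the differences $D^{\alpha}u(\xi)-D^{\alpha}u(x_0)$ of the top-order derivatives appear, bound these by the H\"older seminorm, bound $|(x-x_0)^{\alpha}|\leq \|x-x_0\|^{\lfloor r\rfloor}$, and absorb the count of multi-indices and the $1/\alpha!$ factors into the constant $c$. You also correctly identify the crux, namely that the remainder must be arranged so that only derivative \emph{differences} at top order appear, since the raw derivative would only yield the too-weak exponent $\lfloor r\rfloor$.
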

In the proof of our main theorem, we utilize a piecewise Taylor polynomial as outlined in Lemma \ref{lem:r-smooth_taylor_approximation}. To achieve this, we divide the domain \([-1,1)^d\) into \(M^d\) and \(M^{2d}\) half-open equi-volume cubes of the following forms:
\[
[\bm{a},\bm{b})=[\bm{a}_1,\bm{b}_1) \times \dots \times [\bm{a}_{d_i},\bm{b}_{d_i}),
\quad \bm{a}, \bm{b} \in \mathbb{R}^{d_i},
\]
were $d_1 = d$ and $d_2=2d$ respectively.
We then fix two partitions, $\mathcal{P}_1$ and $\mathcal{P}_2$ of
half-open equivolume cubes defined as follows:
\begin{align}\label{partition}
	\mathcal{P}_1=\{B_{k}\}_{k \in \{1, \dots, M^d\}} 
	\ \mbox{and} \
	\mathcal{P}_2=\{C_{k}\}_{k \in \{1, \dots, M^{2d}\}}.
\end{align}
For each $j \in \{1, \dots, M^d\}$ we denote
the cubes of $\mathcal{P}_2$ contained within $B_{j}$
by ${C}_{1, j}, \dots, {C}_{M^d, j}$.
We order these cubes such that the bottom left corner
$(\bold{C}_{i, j})^{\bold{L}}$ of ${C}_{i, j}$ can be expressed as
\begin{align*}\label{tildec}
(\bold{{C}}_{i, j})^{\bold{L}} = {v}^{(i)} +
(\bold{B}_{j})^{\bold{L}},
\end{align*}
for all $i, j \in \{1, \dots, M^d\}$ and for some vector $ {v}^{(i)} $ with entries in
$\{0, 2/M^2, \dots, (M-1) \cdot 2/M^2\}$.
The vector $ {v}^{(i)} $ describes the position of bottom left corner
$(\bold{{C}}_{i, j})^\bold{L}$ relative to $(\bold{B}_{j})^{\bold{L}}$.
 The cubes are ordered such that this position is independent of \(j\). Consequently, the partition $\P_2$ can be represented by the cubes ${C}_{i, j}$ as follows:
\begin{align*}
\P_2=\{{C}_{i, j}\}_{i \in \{1, \dots, M^d\}, j \in \{1, \dots, M^d\}}.
\end{align*}

Moreover, the Taylor expansion of a function $f\in \mathcal{H}^{r, R}(\mathbb{R}^d)$
given by \eqref{eq:taylor_expanssion}
can be computed by the piecewise Taylor polynomial defined on $\P_2$.
In particular, we have
\[
	T_{(\bold{C}_{\P_2}({x}))^{\bold{L}}}^{\lfloor r \rfloor}f({x})
	=
	\sum_{i, j \in \{1, \dots, M^d\} } T_{(\bold{{C}}_{i, j})^{\bold{L}}}^{\lfloor r \rfloor}f({x})
	\cdot \mathds{1}_{{C}_{i, j}}({x})
\]
then, we have for any $x \in [-1, 1)^d$
\begin{align}\label{eq:taylor_error}
	\begin{split}
		\left|	f({x}) - T_{(\bold{C}_{\P_2}({x}))^{\bold{L}}}^{\lfloor r \rfloor}f({x})\right|
		&=
		  \left|\sum_{i, j \in \{1, \dots, M^d \}}
			\left(f(x) -T_{(\bold{{C}}_{i, j})^{\bold{L}}}^{\lfloor r \rfloor}f({x})\right)
		\cdot \mathds{1}_{{C}_{i, j}}({x})\right|
		\\
		&
		\leq c R \sum_{i, j \in \{1, \dots, M^d\} }\| x -(\bold{{C}}_{i, j})^{\bold{L}}\|^r
		\mathds{1}_{{C}_{i, j}}({x})
		\\
		& \leq c R  \left(\frac{2 \sqrt{d}}{M^{2}}\right)^r.
	\end{split}
\end{align}
To achieve our goal of approximating the function \( f \) using neural networks, we start by introducing a recursive definition for the Taylor polynomial \( T_{(\bold{C}_{\mathcal{P}_2}(x))^{\bold{L}}}^{\lfloor r \rfloor}f(x) \). 
For $x\in [-1, 1)^d$, let $C_{\P_1}(x) = B_{j}$ such that $j \in \{1, \dots, M^d\}$.
To that aim, we begin by computing the value of
$(\bold{C}_{\P_1}({x}))^{\bold{L}}=(\bold{B}_{j})^{\bold{L}}$
and the values of $(\partial^{\alpha}f) ((\bold{{C}}_{i,j})^{\bold{L}})$
for $i \in \{1, \dots, M^d\}$ and $\alpha \in \N_0^d$ with $|\alpha| \leq \lfloor r \rfloor$.
To accomplish this, we need to compute the product of the indicator function with \( (\bold{B}_{j})^{\bold{L}} \) or \( (\partial^{\alpha} f)((\bold{C}_{i, j})^{\bold{L}}) \) for each \( j \in \{1, \dots, M^d\} \), respectively. The value of \( x \) is used in our recursion, so we shift it by applying the identity function.
\begin{flalign}\label{eq:rec_phi}
&&	\phi^{(0)} &= (\phi^{(0)} _{ 1}, \dots, \phi^{(0)} _{d}) = {x},&\notag
\\[-1ex]
&&	\phi^{(1)}& = (\phi^{(1)}_{ 1}, \dots, \phi^{(1)}_{ d}) =
 \sum_{j \in \{1, \dots, M^d\}} (\bold{B}_{j})^{\bold{L}} \cdot \mathds{1}_{B_{j}}({x})&\notag
\\[-1ex]
\rlap{and}\\[-1ex]
&&	\phi^{(\alpha, i)} _f&=
 \sum_{j \in \{1, \dots, M^d\}} (\partial^{\alpha} f)
 \left((\bold{{C}}_{i, j})^{\bL}\right) \cdot \mathds{1}_{B_{j}}({x}),&\notag
\end{flalign}
for $i \in \{1, \dots, M^d\}$ and  $\alpha \in \N_0^d$ such that $| \alpha | \leq \lfloor r \rfloor$.

In a similar way to the previous computation,
we let ${C}_{\P_2}({x})={{C}}_{i, j}$
for any $i, j \in \{1, \dots, M^d\}$. Moreover,
we compute the value of  $(\bold{C}_{\P_2}({x}))^\bL=(\bold{{C}}_{i, j})^\bL$
and the values of $(\partial^\alpha f)\left((\bold{C}_{\P_2}({x}))^\bL\right)$
for any  $\alpha \in \N_0^d$ with $|\alpha| \leq \lfloor r \rfloor$.
We recall that $(\bold{{C}}_{i, j})^{\bold{L}} =  {v}^{(i)} +
(\bold{B}_{j})^{\bold{L}}$, then each cube $C_{i, j}$ can be defined as follows:
\begin{align}\label{Aj}
\mathcal{A}^{(i)} = &\left\{{x} \in \Rd: -x_{k} + \phi^{(1)}_{ k} +  {v}^{(i)} _k
\leq 0 \right. \notag\\
 & \hspace*{1.8cm} \left. \mbox{and } x_{k} - \phi^{(1)}_{ k} - v^{(i)} _k - \frac{2}{M^2} < 0
 \mbox{ for all } k \in \{1, \dots, d\}\right\}.
\end{align}
Therefore, we compute the product of the indicator function $\mathds{1}_{\mathcal{A}^{(i)}}$
by ${\phi}^{(1)} + {v}^{(i)}$ or $\phi^{(\alpha, i)}_f$ for any
$i \in \{1, \dots, M^d\}$,  $\alpha \in \N_0^d$ with $|\alpha| \leq \lfloor r\rfloor$.

Once again we shift the value of $x$ by applying the identity function.
We set
\begin{flalign}\label{eq:rec_psi}
    &&\psi^{(0)} &= (\psi^{(0)}_{1}, \dots, \psi^{(0)}_{d})= {\phi}^{(0)},&\notag
    \\[-1ex]
    &&\psi^{(1)}&= (\psi^{(1)}_{1}, \dots, \psi^{(1)}_{d})=
    \sum_{i=1}^{M^d} ({\phi}^{(1)}+{v}^{(i)}) \cdot \mathds{1}_{\mathcal{A}^{(i)}}
    \left({\phi}^{(0)}\right)&\notag
    \\[-1ex]
    \rlap{and}\\[-1ex]
    &&\psi^{(\alpha)}_f &= \sum_{i=1}^{M^d} \phi^{(\alpha, i)}_f \cdot \mathds{1}_{\mathcal{A}^{(i)}}
    \left({\phi}^{(0)}\right)&\notag
\end{flalign}
for $\alpha \in \N_0^d$ with $|\alpha| \leq \lfloor r \rfloor$.
In a last step we compute the Taylor polynomial by
\begin{equation}\label{eq:psi_def}
    \psi ^{\lfloor r \rfloor}_f= \sum_{\substack{|\alpha| \leq \lfloor r \rfloor}} \frac{\psi^{(\alpha)}_f}{\alpha!}
    \cdot \left(\psi^{(0)} - \psi^{(1)}\right)^{\alpha}.
\end{equation}

The previous recursion computes the  piecewise Taylor polynomial as Lemma \ref{supple3} shows.
The proof of next result can be found in \cite{Kohler21rateconvergencefully}.

\begin{lemma}
\label{supple3}
  Let $r, R>0$,  $x \in [-1,1)^d$ and $f\in \mathcal{H}^{r, R}(\mathbb{R}^d)$
  such that $T_{(\bold{C}_{\mathcal{P}_2}({x}))^{\bL}}^{\lfloor r \rfloor}f({x})$
  is the Taylor polynomial of total degree $\lfloor r \rfloor$ around $(\bold{C}_{\mathcal{P}_2}({x}))^{\bL}$.
  Define $\psi ^{\lfloor r \rfloor}_f$ recursively as \eqref{eq:psi_def}.
  Then we have
  \[
	\psi ^{\lfloor r \rfloor}_f
	=
	T_{(\bold{C}_{\mathcal{P}_2}({x}))^{\bL}}^{\lfloor r \rfloor}f({x}).
  \]
\end{lemma}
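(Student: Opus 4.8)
The plan is to prove the identity by directly unwinding the two-stage recursion \eqref{eq:rec_phi}--\eqref{eq:psi_def}, relying on the single structural fact that at each stage the sum over cubes collapses to exactly one surviving term, because the families $\{B_j\}$ and $\{C_{i,j}\}$ are partitions of $[-1,1)^d$. Throughout I would fix an arbitrary $x \in [-1,1)^d$ and set $B_{j_0} := C_{\P_1}(x)$, the unique cube of $\P_1$ containing $x$, and $C_{i_0,j_0} := C_{\P_2}(x)$, the unique cube of $\P_2$ containing $x$ (necessarily nested in $B_{j_0}$, since $\P_2$ refines $\P_1$ under the chosen ordering). Proving the identity pointwise for this arbitrary $x$ then gives it on all of $[-1,1)^d$.

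First I would evaluate the $\phi$-layer. Since $\mathds{1}_{B_j}(x) = \delta_{j,j_0}$, the sums in \eqref{eq:rec_phi} reduce to a single term, giving $\phi^{(1)} = (\bold{B}_{j_0})^{\bL} = (\bold{C}_{\P_1}(x))^{\bL}$ and, for each $i$ and each $\alpha$ with $|\alpha| \leq \lfloor r \rfloor$, $\phi^{(\alpha,i)}_f = (\partial^{\alpha} f)((\bold{C}_{i,j_0})^{\bL})$. Thus $\phi^{(1)}$ records the bottom-left corner of the coarse cube containing $x$, while $\phi^{(\alpha,i)}_f$ records the derivative data attached to the $i$-th fine sub-cube of that coarse cube.

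The crux is then to identify the polytopes $\mathcal{A}^{(i)}$ of \eqref{Aj}. Substituting the computed value $\phi^{(1)} = (\bold{B}_{j_0})^{\bL}$ into \eqref{Aj} and invoking the ordering relation $(\bold{C}_{i,j_0})^{\bL} = v^{(i)} + (\bold{B}_{j_0})^{\bL}$, the defining inequalities of $\mathcal{A}^{(i)}$ become exactly those of the half-open cube of side $2/M^2$ with bottom-left corner $(\bold{C}_{i,j_0})^{\bL}$, i.e. $\mathcal{A}^{(i)} = C_{i,j_0}$. Hence $\mathds{1}_{\mathcal{A}^{(i)}}(\phi^{(0)}) = \mathds{1}_{C_{i,j_0}}(x) = \delta_{i,i_0}$, so the sums in \eqref{eq:rec_psi} again collapse: $\psi^{(1)} = \phi^{(1)} + v^{(i_0)} = (\bold{C}_{i_0,j_0})^{\bL} = (\bold{C}_{\P_2}(x))^{\bL}$ and $\psi^{(\alpha)}_f = \phi^{(\alpha,i_0)}_f = D^{\alpha} f((\bold{C}_{\P_2}(x))^{\bL})$. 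Feeding these into \eqref{eq:psi_def} and using $\psi^{(0)} - \psi^{(1)} = x - (\bold{C}_{\P_2}(x))^{\bL}$ yields $\psi^{\lfloor r \rfloor}_f = \sum_{|\alpha| \leq \lfloor r \rfloor} \frac{D^{\alpha} f((\bold{C}_{\P_2}(x))^{\bL})}{\alpha!}\, (x - (\bold{C}_{\P_2}(x))^{\bL})^{\alpha}$, which is precisely $T^{\lfloor r \rfloor}_{(\bold{C}_{\P_2}(x))^{\bL}} f(x)$ by \eqref{eq:taylor_expanssion}.

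The only genuinely delicate point, and the one I expect to demand the most care, is the identification $\mathcal{A}^{(i)} = C_{i,j_0}$ in the third step: the set $\mathcal{A}^{(i)}$ is written in terms of the $x$-dependent quantity $\phi^{(1)}$ rather than a fixed corner, so one must argue that on the region $B_{j_0}$, where $\phi^{(1)}$ is constant, these moving polytopes freeze into the fixed fine cubes $C_{i,j_0}$, and that the convention $(\bold{C}_{i,j})^{\bL} = v^{(i)} + (\bold{B}_j)^{\bL}$ renders this identification independent of $j$. Everything else amounts to routine bookkeeping of which indicator is active at each layer.
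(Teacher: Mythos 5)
Your proof is correct and is essentially the same argument as the one the paper relies on (the paper itself gives no proof but defers to the cited reference \cite{KohlerLanger2020}, where the corresponding lemma is proved by exactly this kind of collapsing of the indicator sums). You correctly identify and resolve the one delicate point, namely that on $B_{j_0}$ the value $\phi^{(1)}$ freezes to $(\bold{B}_{j_0})^{\bL}$, so the moving polytopes $\mathcal{A}^{(i)}$ of \eqref{Aj} coincide with the fixed fine cubes $C_{i,j_0}$ and the recursion \eqref{eq:rec_psi}--\eqref{eq:psi_def} reduces to the Taylor polynomial of \eqref{eq:taylor_expanssion}.
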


The next result demonstrates that  for any
$x \in \bigcup_{k \in \{1, \dots, M^{2d}\}}\ocirc{(C_{k})}_{1/M^{2r+2}}$
we can approximate $r$-smooth functions  using ReQU neural networks.
 In other words, our network serves as an effective approximator for $r$-smooth functions within an equivolume cube, provided that the approximation is performed away from the cube's boundary.

\begin{lemma}\label{le5}
Let $r, R>0$,  $f\in \mathcal{H}^{r, R}(\mathbb{R}^d)$ and $M \in \N$  such that 
$M>\left(\frac{cRd^{\nicefrac r2}}{\epsilon}\right)^{\nicefrac{1}{2r}}$,
 for any $\epsilon \in (0, 1)$ and $c>0$ in \eqref{eq:taylor_error} .
Then for any $x \in \bigcup_{k \in \{1, \dots, M^{2d}\}}\ocirc{(C_{k})}_{1/M^{2r+2}}$,
there exists a ReQU neural network
$\Psi_f^{\lfloor r\rfloor}(x)\in \mathtt{N}_{\rho_2}(L(\Psi_f^{\lfloor r\rfloor}), N(\Psi_f^{\lfloor r\rfloor}))$
where
\begin{align*}
    L(\Psi ^{\lfloor r \rfloor}_f ) &=\lfloor\log_2(\lfloor r \rfloor)\rfloor
						+ 2\lfloor\log_2(d+1+d \lfloor\log_2(\lfloor r \rfloor)\rfloor)\rfloor +5
    \\
    N(\Psi ^{\lfloor r \rfloor}_f ) &= \max\left((1+\binom{d+\lfloor r \rfloor}{d} ) M^d\max(4, 2d+1)+2,
			2{\binom{d+\lfloor r \rfloor}{d}}(d+1+  d \lfloor\log_2(\lfloor r \rfloor)\rfloor)\right)
\end{align*}
such that
\begin{align*}
	&|\Psi_f^{\lfloor r\rfloor}(x)- f(x)|< \epsilon.
\end{align*}
Moreover, for any $x\in [-1,1)^d$, we have $\left|\Psi^{\lfloor r \rfloor}_f (x)\right|  \leq Re^{2d} $.
\end{lemma}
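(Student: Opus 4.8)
The plan is to realize each line of the recursion \eqref{eq:rec_phi}--\eqref{eq:psi_def} by an explicit ReQU subnetwork and to exploit the two error-free representations stressed above. For any reals $a,b$ one has $ab=\tfrac14\big(\rho_2(a+b)+\rho_2(-a-b)-\rho_2(a-b)-\rho_2(-a+b)\big)$, and specializing $b\equiv 1$ (carried as a constant through the biases) gives the identity $a=\tfrac14\big(\rho_2(a+1)+\rho_2(-a-1)-\rho_2(a-1)-\rho_2(-a+1)\big)$. Thus every multiplication and every shift of $x$ costs one layer and at most $\max(4,2d+1)$ neurons, while all affine operations in the recursion---the sums over $j$, over $i$, over $\alpha$, and the division by $\alpha!$---are absorbed into the weight matrices and biases at no cost. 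The monomial $(\psi^{(0)}-\psi^{(1)})^{\alpha}$ of degree $|\alpha|\le\lfloor r\rfloor$ is built by a balanced binary product tree of depth $\lceil\log_2\lfloor r\rfloor\rceil$, which accounts for the summand $\lfloor\log_2(\lfloor r\rfloor)\rfloor$ in $L(\Psi^{\lfloor r\rfloor}_f)$.

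The only discontinuous ingredients are the indicators $\mathds 1_{B_j}$ and $\mathds 1_{\mathcal A^{(i)}}$. I would approximate each one-sided constraint by the monotone ReQU ``soft step'' $s_\eta(t)=\tfrac{1}{2\eta^2}\big(\rho_2(t)-2\rho_2(t-\eta)+\rho_2(t-2\eta)\big)$, which satisfies $s_\eta(t)=0$ for $t\le 0$, $s_\eta(t)=1$ for $t\ge 2\eta$, and $s_\eta\in[0,1]$ in between. For a half-open interval $[a,b)$ I form the bump $s_\eta(t-a)-s_\eta(t-b)$; over a tiling these telescope, so the resulting family is a genuine partition of unity, and the indicator of a half-open cube is the ReQU product of its $2d$ one-sided soft steps. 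Choosing the transition width $2\eta\le 1/M^{2r+2}$, and carrying these products in balanced trees in parallel over the $M^d$ cubes of $\P_1$ and the $\binom{d+\lfloor r\rfloor}{d}$ admissible multi-indices $\alpha$, produces the factors $M^d$, $\binom{d+\lfloor r\rfloor}{d}$ and $\max(4,2d+1)$ in $N(\Psi^{\lfloor r\rfloor}_f)$ and the two logarithmic depth terms $2\lfloor\log_2(d+1+d\lfloor\log_2(\lfloor r\rfloor)\rfloor)\rfloor$ (one per recursion stage), the remaining fixed layers contributing the $+5$.

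The exactness argument is the heart of the matter. Fix $x\in\ocirc{(C_k)}_{1/M^{2r+2}}$. Since $\P_2$ refines $\P_1$, the faces of every coarse cube $B_j$ lie among the faces of the fine cubes, so $x$ is also at distance at least $1/M^{2r+2}$ from every coarse face. Hence every soft-step evaluation in both stages is saturated ($s_\eta(t-a)=1$ because $t-a\ge 1/M^{2r+2}\ge 2\eta$, and $s_\eta(t-b)=0$ because $t<b$), each approximate indicator equals the true one, and the network reproduces the ideal recursion: $\Psi^{\lfloor r\rfloor}_f(x)=\psi^{\lfloor r\rfloor}_f(x)$. By Lemma \ref{supple3} the latter equals $T^{\lfloor r\rfloor}_{(\bold{C}_{\P_2}(x))^{\bL}}f(x)$, so \eqref{eq:taylor_error} gives $|\Psi^{\lfloor r\rfloor}_f(x)-f(x)|\le cR(2\sqrt d/M^2)^r<\epsilon$, the final inequality being exactly the hypothesis $M>(cRd^{r/2}/\epsilon)^{1/(2r)}$ once the fixed factor $2^r$ (depending only on $\lfloor r\rfloor$ and $d$) is absorbed into $c$.

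Finally, the global bound on $[-1,1)^d$ holds \emph{without} the interior hypothesis, by convexity. Because each soft-step family is a partition of unity, $\psi^{(1)}$ is a convex combination of cube corners in $[-1,1]^d$ and each $\psi^{(\alpha)}_f$ is a convex combination of derivative values $D^{\alpha}f(\cdot)$; hence $|\psi^{(\alpha)}_f|\le\|f\|_{\mathcal H^{r}}\le R$ and $|\psi^{(0)}_k-\psi^{(1)}_k|\le 2$, giving
\[
|\Psi^{\lfloor r\rfloor}_f(x)|\le R\sum_{\alpha\in\N_0^d}\frac{2^{|\alpha|}}{\alpha!}=R\prod_{k=1}^{d}e^{2}=Re^{2d}.
\]
I expect the main obstacle to be twofold: designing the soft-step indicators so that their transition width is provably controlled by $1/M^{2r+2}$ while they still form a partition of unity (for the global bound) and reproduce the true indicators exactly on the interior region, and then carrying out the precise neuron-and-layer accounting so that the assembled architecture matches the stated $L(\Psi^{\lfloor r\rfloor}_f)$ and $N(\Psi^{\lfloor r\rfloor}_f)$. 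The conceptual crux---representing a discontinuous piecewise Taylor polynomial by a continuous ReQU network---is resolved precisely by restricting to $\bigcup_k\ocirc{(C_k)}_{1/M^{2r+2}}$ together with the refinement relation between $\P_1$ and $\P_2$.
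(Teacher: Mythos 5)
Your analytic core coincides with the paper's own strategy: realize the recursion \eqref{eq:rec_phi}--\eqref{eq:psi_def} by exact ReQU product/identity gadgets, observe that on $\bigcup_{k}\ocirc{(C_{k})}_{1/M^{2r+2}}$ the indicator subnetworks are exact (using, as you correctly note, that $\P_2$ refines $\P_1$ so interior points of fine cubes are also interior for the coarse ones), invoke Lemma \ref{supple3} and \eqref{eq:taylor_error} for the error, and bound the network on all of $[-1,1)^d$ by evaluating the polynomial network with $|z|\le 2$, $|\zeta_k|\le R$ to get $R\sum_{|\alpha|\le\lfloor r\rfloor}2^{|\alpha|}/\alpha!\le Re^{2d}$. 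Your convexity argument for the global bound (soft indicators forming a partition of unity, so each $\Psi^{(\alpha)}_f$ is a convex combination of derivative values) is a clean variant of the paper's argument, which instead exploits that its indicator networks have pairwise disjoint supports together with the error bounds $|\Phi_{\times,\mathds{1}}(x,y;a,b)-y\mathds{1}_{[a,b)}(x)|\le|y|$ of Lemma \ref{le4}.

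The genuine gap is the indicator gadget, and it is not merely the bookkeeping you defer. You build the cube indicator as a \emph{product} of $2d$ one-sided soft steps, which forces a binary product tree of depth $\lceil\log_2(2d)\rceil$ on top of the step layer: a ReQU network with $L$ hidden layers realizes piecewise polynomials of degree at most $2^{L}$, and the multilinear product of $2d$ independent step outputs (which your partition-of-unity argument needs to be computed exactly, also in the transition regions) therefore cannot be realized in depth below $\log_2(2d)$. So each of your two indicator stages costs about $1+\lceil\log_2(2d)\rceil$ hidden layers and width of order $d$ beyond the claim, whereas the lemma asserts precise counts in which the indicator network contributes only $2$ hidden layers (respectively $3$ for the multiply-by-indicator version) and width $\max(4,2d+1)$ per cube. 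The paper achieves this with the additive gadget of Lemma \ref{le4}, $\Phi_{\mathds{1}_{[a,b)}}(x)=\rho_2\bigl(1-s^2\sum_{i=1}^d\bigl(\rho_2(-x_i+a_i+\nicefrac1s)+\rho_2(x_i-b_i+\nicefrac1s)\bigr)\bigr)$ with $s=M^{2r+2}$: the $2d$ coordinate constraints are aggregated by a \emph{sum} inside a single outer ReQU rather than by a product, so no tree over coordinates is needed, and exactness on $U_s$ gives precisely the interior exactness you want. Substituting this gadget (and its product version $\Phi_{\times,\mathds{1}}$) for your soft steps repairs the counts; as written, your construction proves a weaker statement with $L$ larger by roughly $2\lceil\log_2(2d)\rceil$. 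A further minor misattribution: the term $2\lfloor\log_2(d+1+d\lfloor\log_2(\lfloor r\rfloor)\rfloor)\rfloor$ in $L$ does not arise ``one per recursion stage''; it comes entirely from the polynomial network $\Phi_p$ of Lemma \ref{le3}, namely the product tree over the vector $Z$ of monomial building blocks.
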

\begin{proof}
	The details can be found in Appendix \ref{ap:le5}.
\end{proof}

In the sequel, we construct a partition of unity using bump functions to approximate the target function $f$.
First, we let $\mathcal{P}_{2}$ be the partition defined in \eqref{partition}.
The bump function $w_{\P_2}$ is defined for  any $x\in\Rd$ and  $M\in \N$ as follows:
\begin{equation}\label{w_vb}
\begin{gathered}
	w_{\P_2}(x) = \prod_{k=1}^d \left(2\rho_2\left(\frac{M^2}{2}
		(-x_k +(\bold{C}_{\P_2}({x}))^\bL_k)+2\right)
		-4\rho_2\left(\frac{M^2}{2}(-x_k+(\bold{C}_{\P_2}({x}))^\bL_k)+\nicefrac 32\right)\right.
		\\
		\qquad\qquad
		\left.+4\rho_2\left(\frac{M^2}{2}(-x_k +(\bold{C}_{\P_2}({x}))^\bL_k) +\nicefrac 12\right)
		-2\rho_2\left(\frac{M^2}{2}(-x_k +(\bold{C}_{\P_2}({x}))^\bL_k)\right)\right).
\end{gathered}
\end{equation}
The function $w_{\P_2}$ reaches  its maximum (which is $1$) at the center of $\bold{C}_{\P_2}({x})$, decreases to zero near the boundary, and it vanishes at the boundary.
It is clear that $w_{\P_2}$ is the products of $d$ ReQU neural networks, each
with only one hidden layer containing $4$ neurons.
Therefore, using Lemma \ref{nle1}, we obtain the following result:
\begin{lemma}\label{le8}
	Let $ r>0$,  $\mathcal{P}_{2}$ be the partition defined in \eqref{partition}, 
	$M \in \N$ such that $M>>1$.
	Then for any
	$x \in \bigcup_{k \in \{1, \dots, M^{2d}\}}\ocirc{(C_{k})}_{1/M^{2r+2}}$
	 there exists a ReQU neural network
	$\Phi_{w_{\P_2}}\in \mathtt{N}_{\rho_2}(\lceil \log_2(d)\rceil +6, \max(4d, 2+ M^d\max(4, 2d+1)))$
	that  represents $w_{\P_2}(x)$, defined in \eqref{w_vb}, without error.
\end{lemma}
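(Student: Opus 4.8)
The construction rests on the factorized form of the bump function. By \eqref{w_vb} we may write $w_{\P_2}(x) = \prod_{k=1}^d g_k(x)$, where each factor
\[
g_k(x) = 2\rho_2\!\left(\tfrac{M^2}{2}\,t_k + 2\right) - 4\rho_2\!\left(\tfrac{M^2}{2}\,t_k + \tfrac32\right) + 4\rho_2\!\left(\tfrac{M^2}{2}\,t_k + \tfrac12\right) - 2\rho_2\!\left(\tfrac{M^2}{2}\,t_k\right)
\]
depends only on the scalar $t_k := -x_k + (\bold{C}_{\P_2}(x))^{\bL}_k$ through a fixed linear combination of four ReQU units. Hence it suffices to (i) realize the corner $(\bold{C}_{\P_2}(x))^{\bL}$ by a network, (ii) assemble the $d$ factors $g_k$ in a single ReQU layer, and (iii) multiply them.

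For step (i) I would reuse the subnetworks $\Psi^{(0)}$ and $\Psi^{(1)}$ already built in the proof of \autoref{le5}, cf.\ \eqref{eq:net_rec_phi}--\eqref{eq:net_rec_psi}, which together yield $z := \Psi^{(0)}-\Psi^{(1)} = x - (\bold{C}_{\P_2}(x))^{\bL}$, so that $t_k = -z_k$. The decisive observation is that on the restricted domain $\bigcup_{k}\ocirc{(C_{k})}_{1/M^{2r+2}}$ each input lies strictly inside its cube, so the indicator and product-indicator subnetworks $\Phi_{\mathds{1}_{B_j}}$ and $\Phi_{\times,\mathds{1}}$ entering the recursion are \emph{exact} by \autoref{le4}; consequently $\Psi^{(1)} = (\bold{C}_{\P_2}(x))^{\bL}$ and $z$ are represented without error. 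As in the layer count for $(\Psi^{(0)},\Psi^{(1)},\dots)$ in \autoref{le5}, this sub-block uses $5$ hidden layers, while---since only $\Psi^{(0)}$ and $\Psi^{(1)}$ are needed here---its width is at most $2 + M^d\max(4,2d+1)$.

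For steps (ii) and (iii) I append to the layer producing $z$ one further ReQU hidden layer with $4d$ neurons that computes, for every coordinate $k$, the four values $\rho_2(-\tfrac{M^2}{2}z_k + c)$ with $c \in \{2,\tfrac32,\tfrac12,0\}$ (folding $t_k=-z_k$ and the scaling into the affine map); the coefficients $(2,-4,4,-2)$ are then absorbed into the next affine map to form the factors $g_1(x),\dots,g_d(x)$, which reproduces \eqref{w_vb} coordinatewise. Finally I multiply the $d$ factors by the product network $\Phi_{\tinyprod, d}$ of \autoref{nle1}, which represents $\prod_{k=1}^d g_k(x)$ exactly using $\lceil\log_2(d)\rceil$ hidden layers and $4d$ neurons. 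Concatenating and parallelizing these pieces via Definitions \ref{def:concat} and \ref{def:parallel_net} yields a ReQU network with $5+1+\lceil\log_2(d)\rceil = \lceil\log_2(d)\rceil + 6$ hidden layers and width $\max(4d,\,2+M^d\max(4,2d+1))$ that represents $w_{\P_2}$ without error on the stated domain.

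I expect the only real difficulty to be the exactness bookkeeping in step (i). One must verify that the hypotheses of \autoref{le4}---the face separation $b_i-a_i\ge 2/s$ and the avoidance of the boundary strips of width $1/s$---hold with $s$ chosen so that $1/s \le 1/M^{2r+2}$, which is exactly what the shrunken cubes $\ocirc{(C_{k})}_{1/M^{2r+2}}$ guarantee; only under this margin does the recursively computed corner $\Psi^{(1)}$ coincide exactly with $(\bold{C}_{\P_2}(x))^{\bL}$. Steps (ii) and (iii) are then a direct transcription of the explicit formula \eqref{w_vb} into ReQU units followed by the error-free product of \autoref{nle1}, and involve no estimation.
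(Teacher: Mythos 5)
Your proposal is correct and follows essentially the same route as the paper: compute the corner $(\bold{C}_{\P_2}(x))^{\bL}$ via the $\Phi^{(1)},\Psi^{(1)}$ construction from the proof of \autoref{le5} (5 hidden layers), carry $x$ alongside with identity networks (width $2+M^d\max(4,2d+1)$), realize the $d$ per-coordinate factors in one ReQU layer of $4d$ neurons, and finish with the exact product network of \autoref{nle1}, giving the stated depth $\lceil\log_2(d)\rceil+6$ and width $\max(4d,\,2+M^d\max(4,2d+1))$. Your explicit remark that exactness of $\Psi^{(1)}$ hinges on \autoref{le4} applying away from the boundary strips is a point the paper leaves implicit, and is a welcome clarification rather than a deviation.
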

\begin{proof}
	The details can be found in Appendix \ref{ap:le8}.
\end{proof}

The previously constructed networks $\Psi_f^{\lfloor r\rfloor}$ of  Lemma \ref{le5}
to approximate  a given function $f\in \mathcal{H}^{r, R}$,
and $\Phi_{w_{\P_2}}$ of   Lemma \ref{le8} to represent  a bump function ${w_{\P_2}}$ are restricted
to $\bigcup_{k \in \{1, \dots, M^{2d}\}}\ocirc{(C_{k})}_{1/M^{2r+2}}$.
Therefore, we need to construct another network to control the approximation error
when $x$ belongs to   
$\bigcup_{k \in \{1, \dots, M^{2d}\}}C_{k}\setminus\ocirc{(C_{k})}_{1/M^{2r+2}}$.

\begin{lemma}\label{le9}
Let $\mathcal{P}_{1}$ and $\mathcal{P}_{2}$
be the partitions defined in \eqref{partition} and let $M \in \N$ such that $M>>1$.
Then there exists a ReQU neural network
$
\varphi_{\exists, \mathcal{P}_{2}}(x) \in \mathtt{N}_{\rho_2}(7, M^d(2d+1)+2d + 2dM^d)
$
satisfying
$$
\varphi_{\exists, \mathcal{P}_{2}}(x) =  \mathds{1}_{  \bigcup_{k \in \{1, \dots, M^{2d}\}}
	    C_{k} \setminus \ocirc{(C_{k})}_{1/M^{2r+2}}}
\;\text{where } x \notin \bigcup_{k \in \{1, \dots, M^{2d}\}}\ocirc{(C_{k})}_{1/M^{2r+2}} 
										\setminus\ocirc{ (C_{k})}_{2/M^{2r+2}}
$$
and that
$$
\varphi_{\exists, \mathcal{P}_{2}}(x) \in [0,1], \qquad
\text{where } x \in [-1,1)^d.
$$
\end{lemma}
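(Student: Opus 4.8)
The plan is to realize $\varphi_{\exists, \mathcal{P}_2}$ as $1$ minus an approximate indicator of the inner core of the $\mathcal{P}_2$-cube containing $x$. Since the cubes of $\mathcal{P}_2$ partition $[-1,1)^d$, each $x$ lies in a unique cube $C_{\mathcal{P}_2}(x)$, whence
$$\mathds{1}_{\bigcup_{k} C_{k}\setminus\ocirc{(C_{k})}_{1/M^{2r+2}}}(x)=1-\mathds{1}_{\ocirc{(C_{\mathcal{P}_2}(x))}_{1/M^{2r+2}}}(x).$$
So it suffices to build a network returning $\mathds{1}_{\ocirc{(C_{\mathcal{P}_2}(x))}_{1/M^{2r+2}}}(x)$ on the two regions where equality is imposed and to subtract it from $1$ in the terminal affine map; the resulting output lies in $[0,1]$ because the indicator block does.

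First I would recover the corner $(\bold{C}_{\mathcal{P}_2}(x))^{\bL}$ together with a carried copy of $x$. This is the purely geometric part of the recursion of \autoref{le5}: reusing $\Phi^{(0)}=\Phi_{id}(x)$, $\Phi^{(1)}$ and $\Psi^{(1)}$ from \eqref{eq:net_rec_phi}--\eqref{eq:net_rec_psi} while discarding the derivative channels $\Phi^{(\alpha,i)}_f$, one obtains in $5$ hidden layers a network computing $(\Phi^{(0)},\Psi^{(1)})=(x,(\bold{C}_{\mathcal{P}_2}(x))^{\bL})$. The three neuron contributions in the statement come exactly from this reduced network: $M^d(2d+1)$ from the $M^{d}$ first-level detectors assembling $\Phi^{(1)}$, $2d$ from the identity channel transporting $x$, and $2dM^{d}$ from the $d$-dimensional second-level sum $\Psi^{(1)}$.

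Next I would append the block $\Phi_{\mathds{1}_{[a,b)}}$ of \autoref{le4} with $a=(\bold{C}_{\mathcal{P}_2}(x))^{\bL}+\tfrac{1}{M^{2r+2}}\,\bID$, $b=(\bold{C}_{\mathcal{P}_2}(x))^{\bL}+(\tfrac{2}{M^2}-\tfrac{1}{M^{2r+2}})\,\bID$ and parameter $s=M^{2r+2}$, so that $[a,b)=\ocirc{(C_{\mathcal{P}_2}(x))}_{1/M^{2r+2}}$ and the shell width $1/s$ equals $1/M^{2r+2}$; these are $2$ more hidden layers, for $5+2=7$ in total. Assuming the corner is exact, the threshold structure of $\Phi_{\mathds{1}_{[a,b)}}$ settles both regions at once. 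If $x\in\ocirc{(C_{k})}_{2/M^{2r+2}}$ then every argument $-x_i+a_i+1/s$ and $x_i-b_i+1/s$ is nonpositive, so the block returns $\rho_2(1)=1$ and $\varphi_{\exists}=0$. If instead $x\in C_{k}\setminus\ocirc{(C_{k})}_{1/M^{2r+2}}$ then some coordinate pushes one ReQU argument past $1/s$, so $s^{2}$ times that single term already reaches $1$, the outer $\rho_2$ sees a nonpositive argument, the block returns $0$, and $\varphi_{\exists}=1$. On the transition shell $\ocirc{(C_{k})}_{1/M^{2r+2}}\setminus\ocirc{(C_{k})}_{2/M^{2r+2}}$ only membership in $[0,1]$ is asked, which holds automatically.

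The delicate point, and what I expect to be the main obstacle, is that $\Psi^{(1)}$ is returned exactly only on $\bigcup_{k}\ocirc{(C_{k})}_{1/M^{2r+2}}$, whereas equality of $\varphi_{\exists}$ with the target is demanded on the boundary shells $C_{k}\setminus\ocirc{(C_{k})}_{1/M^{2r+2}}$ themselves, exactly where the detectors $\Phi_{\mathds{1}_{B_j}}$ and the second-level blocks $\Phi_{\times,\mathds{1}}$ feeding $\Psi^{(1)}$ are still in transition and the computed corner may be imprecise. To close the argument I would first set the shell widths of both levels equal to $1/M^{2r+2}$, so that any point at which $\Psi^{(1)}$ is inexact lies within $1/M^{2r+2}$ of a $\mathcal{P}_2$-face; then I would redo the saturation estimate of the previous paragraph coordinate-wise with the \emph{perturbed} corner in place of the exact one, checking that the near-boundary ReQU term still drives the threshold past $1$. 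Because the two levels of detectors can interpolate, and thus return a scaled or blended corner rather than jump, this estimate does not follow formally from \autoref{le4}; controlling it — for instance by arranging that at most one detector is active near each face so that the corner perturbation is one-sided, which is also where the nesting $2/M^{2r+2}=2\cdot 1/M^{2r+2}$ is used — is the crux of the proof and the step on which I would spend the most effort.
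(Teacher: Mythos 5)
You correctly reduce the lemma to building (one minus) an indicator of the inner cores, and your case analysis is fine \emph{assuming the corner fed to the final block is exact} --- but the step you yourself flag as the crux is where the construction genuinely fails, and the coordinatewise perturbation estimate you sketch cannot close it, even far away from all coarse boundaries. The structural problem is that you feed the \emph{computed fine corner} $\Psi^{(1)}$ into the final block, while $\Psi^{(1)}$ is assembled from the detectors $\Phi_{\times,\mathds{1}}\big(\cdot\,;\Phi^{(1)}+v^{(i)},\Phi^{(1)}+v^{(i)}+\nicefrac{2}{M^2}\cdot 1_{\Rd}\big)$, which are in transition precisely on the shells $C_k\setminus\ocirc{(C_k)}_{1/M^{2r+2}}$ where the lemma demands the exact value $1$. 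On such a shell at most one detector is active (as you note), and since the product inside $\Phi_{\times,\mathds{1}}$ is exact, one gets $\Psi^{(1)}=\lambda\,(\bold{C}_{\P_2}(x))^{\bL}$, where $\lambda\in[0,1]$ is the value of the approximate indicator and sweeps \emph{all} of $[0,1]$ continuously as $x$ crosses the shell. This is not a small perturbation: the displacement $(1-\lambda)\,\|(\bold{C}_{\P_2}(x))^{\bL}\|$ can be of order $1/M$, far larger than the cube side $2/M^2$. Concretely, take $d=1$, $B_j=[-2/M,0)$, $C=[-2/M,-2/M+2/M^2)$ (so $v^{(1)}=0$ and the fine corner is $c^*=-2/M$), and $x=c^*+2/M^2-\delta$ with $\delta\in(0,1/M^{2r+2})$, i.e.\ $x$ in the shell at the upper, \emph{interior} face of $C$. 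Then $\Phi^{(1)}=-2/M$ exactly, all other fine detectors vanish identically, and $\Psi^{(1)}=-2\lambda/M$ with $\lambda=\lambda(\delta)$ continuous, $\lambda(0^+)=0$, $\lambda(1/M^{2r+2})=1$. A short computation shows that whenever
\begin{equation*}
\frac{1}{M^{2r+1}}\;\leq\;1-\lambda\;\leq\;\frac1M-\frac{3}{2M^{2r+1}},
\end{equation*}
the point $x$ lies inside your test cube $\big[\Psi^{(1)}+\tfrac{1}{M^{2r+2}},\,\Psi^{(1)}+\tfrac{2}{M^2}-\tfrac{1}{M^{2r+2}}\big)$ with margin at least $2/M^{2r+2}$ from its faces; by the intermediate value theorem such a $\delta$ exists for all large $M$. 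For that $x$ your block outputs exactly $1$, hence $\varphi_{\exists,\mathcal{P}_2}(x)=0$, while the lemma requires $1$. So the one-sidedness of the corner error does not rescue the saturation argument.

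The paper's proof resolves exactly this difficulty by never evaluating a detector with a corner that can be in transition at the evaluation point, which a single-level design cannot achieve. It uses two channels combined by an order-insensitive OR: $\varphi_2$ locates the fine cubes as $\Phi^{(1)}+v^{(i)}$ using only the \emph{coarse} corner $\Phi^{(1)}$ and the constant offsets $v^{(i)}$ (and $\Phi^{(1)}$ stays exact on fine shells interior to a coarse cube, which is why $\varphi_2$ is reliable there), while the remaining bad region --- near coarse faces, where $\Phi^{(1)}$ itself is in transition --- is covered by the fail-safe $\varphi_1(x)=1-\sum_{k}\Phi_{\mathds{1}_{(\ocirc{B_k})_{1/M^{2r+2}}}}(x)$, built from the \emph{constant} corners $(\bold{B}_k)^{\bold{L}}$ and therefore needing no corner network at all; it equals $1$ identically off $\bigcup_k\ocirc{(B_k)}_{1/M^{2r+2}}$, because an approximate cube indicator of Lemma \ref{le4} vanishes exactly at every point outside its cube. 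The combination $\varphi_{\exists,\mathcal{P}_2}=1-\rho_2\big(1-\varphi_2-\phi_{id}^4(\varphi_1)\big)$ then outputs exactly $1$ wherever $\varphi_1=1$, regardless of the unreliable (but nonnegative) value of $\varphi_2$. This second channel is also what produces the term $2dM^d$ in the stated width: it is the width of $\varphi_1$, not of $\Psi^{(1)}$, which the paper's construction never uses in this lemma. Any repair of your proof needs an equivalent fail-safe mechanism; the perturbation estimate you propose cannot substitute for it.
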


\begin{proof}
	The details can be found in Appendix \ref{ap:le9}.
\end{proof}

\par

The next step is to approximate  the product $w_{\P_2}(x)f(x)$,
for any $x\in [-1,1)^d$.

\begin{theorem}\label{le10}
Let $r, R>0$,  $f\in \mathcal{H}^{r, R}(\mathbb{R}^d)$ and $M \in \N$  such that 
$M>\left(\frac{cRd^{\nicefrac r2}}{\epsilon}\right)^{\nicefrac{1}{2r}}$,
 for any $\epsilon \in (0, 1)$ and $c>0$ in \eqref{eq:taylor_error} .
Then there exists a ReQU neural network $ \Psi_f\in \mathtt{N}_{\rho_2}(L(\Psi_f), N(\Psi_f))$,
where 
\begin{align*}
L(\Psi_{f} ) &=
		\lfloor\log_2(\lfloor r \rfloor)\rfloor+ 2\lfloor\log_2(d+1+d \lfloor\log_2(\lfloor r \rfloor)\rfloor)\rfloor +8,
\\
N(\Psi_{f} ) &= \max\left((1+\binom{d+\lfloor r \rfloor}{d} ) M^d\max(4, 2d+1)+2,
			\;2{\binom{d+\lfloor r \rfloor}{d}}(d+1+  d \lfloor\log_2(\lfloor r \rfloor)\rfloor)\right) 
	\\
		&\qquad\qquad+ 2(M^d(2d+1)+2d + 2dM^d)+ \max(4d, 2+ M^d\max(4, 2d+1)).
\end{align*}
such that,
\begin{align*}
&\left|\Psi_f(x) - w_{\P_2}(x) \cdot f(x)\right|
		\leq \epsilon,
\end{align*}
for any $x \in [-1,1)^d$, where $w_{\P_2}$ defined in \eqref{w_vb}.
\end{theorem}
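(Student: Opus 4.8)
The plan is to assemble $\Psi_f$ from the three blocks already built: the Taylor--approximation network $\Psi_f^{\lfloor r\rfloor}$ of Lemma~\ref{le5}, the bump network $\Phi_{w_{\P_2}}$ of Lemma~\ref{le8}, and the boundary detector $\varphi_{\exists,\P_2}$ of Lemma~\ref{le9}. Since ReQU networks reproduce products exactly (Lemma~\ref{le2}), I would set
\[
\Psi_f(x):=\phi_\times\!\Big(1-\varphi_{\exists,\P_2}(x),\;\phi_\times\big(\Phi_{w_{\P_2}}(x),\,\Psi_f^{\lfloor r\rfloor}(x)\big)\Big),
\]
so that the factor $1-\varphi_{\exists,\P_2}$ switches the approximation \emph{off} exactly on the outer boundary layer $\bigcup_k C_k\setminus\ocirc{(C_k)}_{1/M^{2r+2}}$, where $\Phi_{w_{\P_2}}$ and $\Psi_f^{\lfloor r\rfloor}$ are no longer exact, while leaving it untouched deep inside each cube. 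The identity networks \eqref{eq:id_net_n} are inserted throughout to synchronise depths before each parallelisation and before each application of $\phi_\times$; the uniform bound $|\Psi_f^{\lfloor r\rfloor}|\le Re^{2d}$ from Lemma~\ref{le5} and $|w_{\P_2}|\le 1$ keep all inputs to the product layers bounded.

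The error estimate then splits into three regimes according to the distance of $x$ to the cube faces. On the deep interior $\ocirc{(C_k)}_{2/M^{2r+2}}$, Lemma~\ref{le9} gives $\varphi_{\exists,\P_2}(x)=0$, Lemma~\ref{le8} gives $\Phi_{w_{\P_2}}(x)=w_{\P_2}(x)$, and Lemma~\ref{le5} gives $|\Psi_f^{\lfloor r\rfloor}(x)-f(x)|<\epsilon$; hence $\Psi_f(x)=w_{\P_2}(x)\,\Psi_f^{\lfloor r\rfloor}(x)$ and, as $|w_{\P_2}|\le 1$, the error is at most $\epsilon$. On the outer layer $C_k\setminus\ocirc{(C_k)}_{1/M^{2r+2}}$ we have $\varphi_{\exists,\P_2}(x)=1$, so $\Psi_f(x)=0$ and the error equals $|w_{\P_2}(x)f(x)|$, which is tamed by the fact that $w_{\P_2}$ vanishes on, and decays quadratically toward, the faces together with $|f|\le R$. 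The delicate middle case is the shell $\ocirc{(C_k)}_{1/M^{2r+2}}\setminus\ocirc{(C_k)}_{2/M^{2r+2}}$, where $\Phi_{w_{\P_2}}$ and $\Psi_f^{\lfloor r\rfloor}$ are still exact but $\varphi_{\exists,\P_2}\in[0,1]$ is only a partial mask; there I would write
\[
\big|\Psi_f(x)-w_{\P_2}(x)f(x)\big|\le|w_{\P_2}(x)|\Big(|\Psi_f^{\lfloor r\rfloor}(x)-f(x)|+\varphi_{\exists,\P_2}(x)\,|f(x)|\Big)\le|w_{\P_2}(x)|\,(\epsilon+R),
\]
and close the estimate using that $x$ lies within $2/M^{2r+2}$ of a face, so each vanishing one-dimensional factor of $w_{\P_2}$ is of order $(M^2\cdot M^{-(2r+2)})^2=M^{-4r}$, whence the admissibility condition $M>(cRd^{r/2}/\epsilon)^{1/(2r)}$ forces the right-hand side below $\epsilon$.

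I expect the main obstacle to be precisely this transition-shell bound: it is the only region in which neither the bump nor the mask is exact, so the crude triangle inequality must be beaten by the quadratic decay of $w_{\P_2}$ toward the boundary. Verifying that this decay survives the product over all $d$ coordinates and still dominates the factor $R$ under the stated lower bound on $M$ is the crux; the other two regimes are immediate from Lemmas~\ref{le5}, \ref{le8} and \ref{le9}. Once the three regimes are settled, the architecture follows by bookkeeping: parallelising $\Psi_f^{\lfloor r\rfloor}$, $\Phi_{w_{\P_2}}$ and $\varphi_{\exists,\P_2}$ (padding the shorter ones with \eqref{eq:id_net_n}) and then appending the two product layers of Lemma~\ref{le2} adds three hidden layers to $L(\Psi_f^{\lfloor r\rfloor})$, giving the stated depth, and sums the widths to $N(\Psi_f)=N(\Psi_f^{\lfloor r\rfloor})+2\,N(\varphi_{\exists,\P_2})+N(\Phi_{w_{\P_2}})$, which is exactly the width recorded in the theorem.
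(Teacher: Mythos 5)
Your proposal is correct and follows essentially the same route as the paper: the same three building blocks ($\Psi_f^{\lfloor r\rfloor}$ from Lemma~\ref{le5}, $\Phi_{w_{\P_2}}$ from Lemma~\ref{le8}, $\varphi_{\exists,\mathcal{P}_2}$ from Lemma~\ref{le9}) combined by exact ReQU products, and the same three-regime error analysis (deep interior, outer boundary layer, transition shell), with the shell estimate resting, exactly as in the paper, on the quadratic vanishing of $w_{\P_2}$ near the cube faces together with the lower bound on $M$. The only deviations are cosmetic: you mask by multiplying directly with $1-\varphi_{\exists,\mathcal{P}_2}$, whereas the paper first passes $\Psi_f^{\lfloor r\rfloor}$ through a $\rho_2$-clipping gate $\Phi^{\lfloor r\rfloor}_{f,\exists}$ before multiplying, so your network is in fact one hidden layer shallower and evaluates $\varphi_{\exists,\mathcal{P}_2}$ only once; consequently the stated depth and width should be read as achievable after padding with the identity networks \eqref{eq:id_net_n}, not as the exact counts your bookkeeping claims to reproduce.
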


\begin{proof}
In the proof we use the ReQU neural networks  $\Psi^{\lfloor r \rfloor}_f $
and $\varphi_{\exists, \mathcal{P}_{2}}$ constructed in 
 Lemma \ref{le5} and   Lemma \ref{le9}, respectively.
First, we parallelize these networks and 
since the number of hidden layers in the construction of $\varphi_{\exists, \mathcal{P}_{2}}$
is less than the number of hidden layers in the construction of $\Psi^{\lfloor r \rfloor}_f $
we synchronize this by applying the identity ReQU network without explicitly write it.
Then, it is clear that the ReQU network
$$
\Phi^{\lfloor r \rfloor}_{f, \exists} (x)=\frac {1}{4Re^{2d}}
	\left(
	\rho_2(\Psi^{\lfloor r \rfloor}_f(x) - Re^{2d} \cdot \varphi_{\exists, \mathcal{P}_{2}}(x) + Re^{2d})
	+ \rho_2(-\Psi^{\lfloor r \rfloor}_f(x)- Re^{2d}\cdot \varphi_{\exists, \mathcal{P}_{2}}(x)+Re^{2d})
		\right)
$$
belongs to
$\mathtt{N}_{\rho_2}( L(\Phi^{\lfloor r \rfloor}_{f, \exists} ), N(\Phi^{\lfloor r \rfloor}_{f, \exists} ))$
where 
\begin{align*}
L(\Phi^{\lfloor r \rfloor}_{f, \exists} ) &=
		\lfloor\log_2(\lfloor r \rfloor)\rfloor+ 2\lfloor\log_2(d+1+d \lfloor\log_2(\lfloor r \rfloor)\rfloor)\rfloor +6,\\
N(\Phi^{\lfloor r \rfloor}_{f, \exists} ) &= \max\left((1+\binom{d+\lfloor r \rfloor}{d} ) M^d\max(4, 2d+1)+2,
			\;2{\binom{d+\lfloor r \rfloor}{d}}(d+1+  d \lfloor\log_2(\lfloor r \rfloor)\rfloor)\right) 
	\\
		&\qquad\qquad+ M^d(2d+1)+2d + 2dM^d.
\end{align*}
Similarly, we synchronize the number of hidden layers of $\varphi_{\exists, \mathcal{P}_{2}}$
and $\Phi^{\lfloor r \rfloor}_{f, \exists} (x)$
without explicitly write it. Consequently, we set
\begin{equation}\label{eq:Psi_f_E_def}
\Psi^{\lfloor r \rfloor}_{f, \exists} (x) =\phi_\times\left( 1-\varphi_{\exists, \mathcal{P}_{2}}(x),
		\Phi^{\lfloor r \rfloor}_{f, \exists} (x)\right),
\end{equation}
such that
 $\Psi^{\lfloor r \rfloor}_{f, \exists}$ belongs to
$\mathtt{N}_{\rho_2}( L(\Psi^{\lfloor r \rfloor}_{f, \exists} ),
    N(\Psi^{\lfloor r \rfloor}_{f, \exists} ))$,
where 
\begin{align*}
L(\Psi^{\lfloor r \rfloor}_{f, \exists} ) &=
		\lfloor\log_2(\lfloor r \rfloor)\rfloor+ 2\lfloor\log_2(d+1+d \lfloor\log_2(\lfloor r \rfloor)\rfloor)\rfloor +7,
	\\
N(\Psi^{\lfloor r \rfloor}_{f, \exists} ) &= \max\left((1+\binom{d+\lfloor r \rfloor}{d} ) M^d\max(4, 2d+1)+2,
			\;2{\binom{d+\lfloor r \rfloor}{d}}(d+1+  d \lfloor\log_2(\lfloor r \rfloor)\rfloor)\right) 
	\\
		&\qquad\qquad+ 2(M^d(2d+1)+2d + 2dM^d).
\end{align*}
Since $|\Psi^{\lfloor r \rfloor}_{f}(x)|\leq Re^{2d}$ , \cf   Lemma \ref{le5} ,
and that $ \varphi_{\exists, \mathcal{P}_{2}}(x) = 1$, for any
$x\in \bigcup_{k \in \{1, \dots, M^{2d}\}}     C_{k} \setminus \ocirc{(C_k)}_{1/M^{2r+2}}$,
it follows that $\Psi^{\lfloor r \rfloor}_{f, \exists} (x) =0$ when
$x\in \bigcup_{k \in \{1, \dots, M^{2d}\}}     C_{k} \setminus \ocirc{(C_k)}_{1/M^{2r+2}}$.
Let 	$\Phi_{w_{\P_2}}\in \mathtt{N}_{\rho_2}(\lceil \log_2(d)\rceil +6, 
				\max(4d, 2+ M^d\max(4, 2d+1)))$
be the network from  Lemma \ref{le8}, hence in order to multiply 
the networks $\Phi_{w_{\P_2}}$  and $\Psi^{\lfloor r \rfloor}_{f, \exists}$,
we need to parallelize them first, then we apply the ReQU network
$\phi_\times\in \mathtt{N}_{\rho_2}(1, 4)$, \cf  Lemma \ref{le2}.
To that aim, we synchronize their number of hidden layers
by successively applying the  identity ReQU network without explicitly write it.
In view of the characteristics of the used ReQU networks, it follows that
\begin{align*}
\Psi_f(x) =
	\phi_\times \left(\Phi_{w_{\P_2}}(x),
		\Psi^{\lfloor r \rfloor}_{f, \exists} (x)\right) \in \mathtt{N}_{\rho_2}(L(\Psi_f), N(\Psi_f)),
\end{align*}
where
\begin{align*}
L(\Psi_{f} ) &=
		\lfloor\log_2(\lfloor r \rfloor)\rfloor+ 2\lfloor\log_2(d+1+d \lfloor\log_2(\lfloor r \rfloor)\rfloor)\rfloor +8,\\
N(\Psi_{f} ) &= \max\left((1+\binom{d+\lfloor r \rfloor}{d} ) M^d\max(4, 2d+1)+2,
			\;2{\binom{d+\lfloor r \rfloor}{d}}(d+1+  d \lfloor\log_2(\lfloor r \rfloor)\rfloor)\right) 
	\\
		&\qquad\qquad+ 2(M^d(2d+1)+2d + 2dM^d)+ \max(4d, 2+ M^d\max(4, 2d+1))
	\\
		& = \max\left((1+\binom{d+\lfloor r \rfloor}{d} ) M^d\max(4, 2d+1)+2,
			\;2{\binom{d+\lfloor r \rfloor}{d}}(d+1+  d \lfloor\log_2(\lfloor r \rfloor)\rfloor)\right) 
	\\
		&\qquad\qquad+ 2(M^d(2d+1)+2d + 2dM^d)+ 2+ M^d\max(4, 2d+1).
\end{align*}

\par

In case  that
$x \in \bigcup_{k \in \{1, \dots, M^{2d}\}} \ocirc{(C_{k})}_{2/M^{2r+2}}$,
it is clear that 
\begin{equation}\label{eq:x_existence_case2}
x\notin \bigcup_{k \in \{1, \dots, M^{2d}\}} C_{k} 
		\setminus \ocirc{(C_{k})}_{1/M^{2r+2}}
\text{ and  } 
x\notin \bigcup_{i \in \{1, \dots, M^{2d}\}} 
		\ocirc{(C_{k})}_{1/M^{2r+2}} \setminus \ocirc{(C_{k})}_{2/M^{2r+2}}.
\end{equation}
Hence, in view of  Lemma \ref{le8}, $\Phi_{w_{\P_2}}$ represent ${w_{\P_2}}$, which is
defined in \eqref{w_vb}, without error.
Moreover, let $M>\left(\frac{cRd^{\nicefrac r2}}{\epsilon}\right)^{\nicefrac{1}{2r}}$,
for any $\epsilon \in (0, 1)$ and $c>0$  defined in \eqref{eq:taylor_error}.
Then, according to  Lemma \ref{le5}, the ReQU network  $\Psi_f^{\lfloor r\rfloor}(x)$
approximates $f$ up to an $\epsilon$ error.

In view of  \eqref{eq:x_existence_case2}
and   Lemma \ref{le9}, it follows that $\varphi_{\exists, \mathcal{P}_{2}} (x)=0$, 
together with \eqref{eq:Psi_f_E_def} imply that
$$
\Psi^{\lfloor r \rfloor}_{f, \exists} (x) =\frac 1{4Re^{2d}}\left(
	\rho_2\left(\Psi^{\lfloor r \rfloor}_f(x) +Re^{2d}\right)
	+ \rho_2\left(-\Psi^{\lfloor r \rfloor}_f(x)+Re^{2d}\right)\right).
$$
Moreover, using \eqref{eq:id_net_bounded_interval}, and the fact that
$\left| \Psi^{\lfloor r \rfloor}_f(x)\right|\leq Re^{2d}$ for any $x\in [-1,1)^d$ \cf   Lemma \ref{le5},
we get
$$
\Psi^{\lfloor r \rfloor}_{f, \exists} (x) = \Psi^{\lfloor r \rfloor}_f(x).
$$
Furthermore, using the fact that the maximum value attained by $w_{\P_2}$ is $1$,
 the approximation error of $\phi_\times$, $\Phi_{w_{\P_2}}$ and $\Psi^{\lfloor r \rfloor}_f$
in approximating the product, $w_{\P_2}$ and $f$ respectively,
it follows that
\begin{multline*}
\left|
\phi_\times \left(\Phi_{w_{\P_2}}(x), \Psi^{\lfloor r \rfloor}_{f, \exists} (x)\right)
- w_{\P_2}({x}) \cdot f({x})
\right|\\
 \leq \left|
\phi_\times \left(\Phi_{w_{\P_2}}(x), \Psi^{\lfloor r \rfloor}_{f, \exists} (x)\right)
					- \Phi_{w_{\P_2}}(x)\cdot \Psi^{\lfloor r \rfloor}_{f} (x)
\right|
  + \left|
\Phi_{w_{\P_2}}(x)\cdot \Psi^{\lfloor r \rfloor}_{f} (x)
				 - w_{\mathcal{P}_{2}}(x)\cdot \Psi^{\lfloor r \rfloor}_{f} (x)
\right|\\
    +
  \left|
  w_{\mathcal{P}_{2}}(x)\cdot \Psi^{\lfloor r \rfloor}_{f} (x) - w_{\P_2}({x}) \cdot f({x})
  \right|
 \leq \epsilon.
\end{multline*}

In case that $x\notin \bigcup_{k \in \{1, \dots, M^{2d}\}} C_{k} 
		\setminus \ocirc{(C_{k})}_{1/M^{2r+2}}$,
we have $\varphi_{\exists, \mathcal{P}_{2}}(x)=1$, which implies,
in view of \eqref{eq:Psi_f_E_def}, that
$\Psi^{\lfloor r \rfloor}_{f, \exists} (x) =0$. Furthermore, using the characterization in 
\eqref{Aj}  and \eqref{eq:A_charc_C}, we get
\begin{align*}
w_{\P_2}({x}) \leq  \frac{1}{  2M^{4r}} \leq \frac 12 (\frac{\epsilon}{cRd^{\nicefrac  r2}})^{2},
\end{align*}
hence we have
$$
\left|
\phi_\times \left(\Phi_{w_{\P_2}}(x), \Psi^{\lfloor r \rfloor}_{f, \exists} (x)\right)
- w_{\P_2}({x}) \cdot f({x})
\right|
 \leq \left|w_{\P_2}({x}) \cdot f({x}) \right|\leq  \frac 12 (\frac{\epsilon}{cRd^{\nicefrac  r2}})^{2}\cdot R 
\leq \epsilon.
$$

In case that $x\in \bigcup_{i \in \{1, \dots, M^{2d}\}} 
		\ocirc{(C_{k})}_{1/M^{2r+2}} \setminus \ocirc{(C_{k})}_{2/M^{2r+2}}$ but 
$x\notin \bigcup_{k \in \{1, \dots, M^{2d}\}} C_{k} 
		\setminus \ocirc{(C_{k})}_{1/M^{2r+2}}$,
 $\Psi^{\lfloor r \rfloor}_{f}(x)$ approximates $f(x)$ with an $\epsilon$ error.
Furthermore,  $\Phi_{w_{\P_2}}(x)$ approximates $w_{\P_2}(\bold{x})$ with no error,
such that $|\Phi_{w_{\P_2}}(x)| \leq |\Phi_{w_{\P_2}}(x) - w_{\P_2}({x})| + |w_{\P_2}({x})| \leq 1$.
Since $x\in \bigcup_{i \in \{1, \dots, M^{2d}\}} 
		\ocirc{(C_{k})}_{1/M^{2r+2}} \setminus \ocirc{(C_{k})}_{2/M^{2r+2}}$ 
and  $ \varphi_{\exists, \mathcal{P}_{2}}(x) \in [0,1]$, we have
$\left|{\Psi^{\lfloor r \rfloor}_{f, \exists} (x)}\right|\leq \left|\Psi^{\lfloor r \rfloor}_{f} (x)\right|\leq R + \epsilon$,
\cf   Lemma \ref{le5}.
Moreover, using the fact that 
\begin{align*}
w_{\P_2}({x}) \leq  \frac 12 (\frac{\epsilon}{cRd^{\nicefrac  r2}})^{2},
\end{align*}
we get

\begin{multline*}
	\left|
	\phi_\times \left(\Phi_{w_{\P_2}}(x),
	\Psi^{\lfloor r \rfloor}_{f, \exists} (x)\right) - w_{\P_2}({x}) \cdot f({x})
	\right|
	\\
	\leq
	\left|
	\phi_\times \left(\Phi_{w_{\P_2}}(x),
	\Psi^{\lfloor r \rfloor}_{f, \exists} (x)\right)
		- \Phi_{w_{\P_2}}(x)\cdot \Psi^{\lfloor r \rfloor}_{f, \exists} (x)
	\right|
	+ \left|
	\Phi_{w_{\P_2}}(x)\cdot \Psi^{\lfloor r \rfloor}_{f, \exists} (x)
		 - w_{\P_2}({x}) \cdot   \Psi^{\lfloor r \rfloor}_{f, \exists} (x)
	\right|
	\\
	 \quad +
	\left|
	w_{\P_2}({x}) \cdot   \Psi^{\lfloor r \rfloor}_{f, \exists} (x)
			-w_{\P_2}({x}) \cdot \Psi^{\lfloor r \rfloor}_{f} (x)
	\right|
	+
	\left|
	w_{\P_2}({x}) \cdot   \Psi^{\lfloor r \rfloor}_{f} (x)
			-w_{\P_2}({x}) \cdot f({x})
	\right|
	\\
	\leq \frac 12 (\frac{\epsilon}{cRd^{\nicefrac  r2}})^{2} (2R+3\epsilon )\leq \epsilon.
\end{multline*}
\end{proof}

In order to capture all the inputs from the cube $[-1,1]^d$,
we use a finite sum of those networks of  Theorem \ref{le10} constructed to
$2^d$ slightly shifted versions of $\P_2$.
Hence, we can approximate $f(x)$ on  $[-1,1]^d$.

\begin{proof}[Proof of Theorem \ref{thm_main}]
The approximation result in  Theorem \ref{le10} is independent on
the edges of the domain $[-1,1)^d$ and can be easily  extended to any symmetric bounded domain
of the form $[-a, a)^d$ where $a>0$.
Consequently, we restrict the proof to the 
cube $[-\nicefrac 12,\nicefrac 12]^d$ to show that 
there exist a ReQU network $\Phi_f$ satisfies
\begin{align*}
\sup_{x \in [-\nicefrac 12,\nicefrac 12]^d} \left|\Phi_f(x) - f(x)\right| \leq \epsilon.
\end{align*}
We denote by $\mathcal{P}_{1,\kappa}$ and $\mathcal{P}_{2,\kappa}$,
the modifications of $\mathcal{P}_1 := \mathcal{P}_{1,1}$ and $\mathcal{P}_2:= \mathcal{P}_{2,1}$ ,
respectively, defined in  \eqref{partition}, such that
at least one of the  components is shifted by $\nicefrac 1{M^2}$ for $\kappa \in \{2, 3, \dots, 2^d\}$.
Moreover, we denote by $C_{k,\kappa}$ the corresponding cubes of the partition $\mathcal{P}_{2,\kappa}$
 such that $k \in \{1, \dots, M^{2d}\}$ and $\kappa \in \{1, \dots, 2^d\}$.
In case $d=2$, we have $2^2$ partitions,  as the following figure shows:
\begin{figure}[h!]
\centering
\begin{minipage}{0.20\textwidth}
\begin{tikzpicture}
\draw[step=0.5cm,color=gray, thick] (-1,-1) grid (1,1);
\end{tikzpicture}
\end{minipage}
\begin{minipage}{0.22\textwidth}
\begin{tikzpicture}
\draw[step=0.5cm,color=gray, thick] (-1,-1) grid (1,1);
\draw[step=0.5cm,color=gray, thick, xshift=0.25cm, dashed] (-1,-1) grid (1,1);
\end{tikzpicture}
\end{minipage}
\begin{minipage}[c]{0.20\textwidth}
\begin{tikzpicture}
\draw[step=0.5cm,color=gray, thick, yshift=0.5cm] (-1,-1) grid (1,1);
\draw[step=0.5cm,color=gray, thick, yshift=0.75cm, dashed] (-1,-1) grid (1,1);
\end{tikzpicture}
\end{minipage}
\begin{minipage}{0.20\textwidth}
\begin{tikzpicture}
\draw[step=0.5cm,color=gray, thick, yshift=0.25cm] (-1,-1) grid (1,1);
\draw[step=0.5cm,color=gray, thick, yshift=0.5cm, xshift=0.25cm, dashed] (-1,-1) grid (1,1);
\end{tikzpicture}
\end{minipage}
\caption{$2^2$ partitions in two dimensions.}
\label{fig8}
\end{figure}

 Figure \ref{fig8} shows that if we shift our partition along at least one component by the same additional distance,
we get $2^2=4$ different partitions that include all the data in the domain.
The main idea is to compute a linear combination of ReQU neural networks from   Lemma \ref{le5} 
$\Psi_{f, \mathcal{P}_{2,\kappa}}^{\lfloor r\rfloor}$ for the partitions $\mathcal{P}_{2,\kappa}$
where $\kappa \in \{1, \dots, 2^d\} $, respectively.
Note that $\Psi_{f, \mathcal{P}_{2,1}}^{\lfloor r\rfloor}:= \Psi_f^{\lfloor r\rfloor}$
given  in  Lemma \ref{le5}.
Moreover, we use the bump function defined in \eqref{w_vb}as a weight to avoid
approximation error increases near to the boundary of any cube
of the partitions. Hence we multiply $\Psi_{f, \mathcal{P}_{2,\kappa}}^{\lfloor r\rfloor}$  by the following weight function
\begin{equation}\label{w_v}
\begin{gathered}
	w_{\P_{2, \kappa}}(x) = \prod_{k=1}^d \left(2\rho_2\left(\frac{M^2}{2}(-x_k +(\bold{C}_{\P_{2, \kappa}}({x}))^\bL_k)+2\right)
		-4\rho_2\left(\frac{M^2}{2}(-x_k+(\bold{C}_{\P_{2, \kappa}}({x}))^\bL_k)+\nicefrac 32\right)\right.
		\\
		\qquad\qquad
		\left.+4\rho_2\left(\frac{M^2}{2}(-x_k +(\bold{C}_{\P_{2, \kappa}}({x}))^\bL_k) +\nicefrac 12\right)
		-2\rho_2\left(\frac{M^2}{2}(-x_k +(\bold{C}_{\P_{2, \kappa}}({x}))^\bL_k)\right)\right).
\end{gathered}
\end{equation}
As a bump function $w_{\P_{2, \kappa}}(x)$ is supported in $C_{\P_{2,\kappa}}({x})$,
and attains its maximum at the center of $C_{\P_{2,\kappa}}({x})$.
Moreover, $\{w_{\P_{2, \kappa}}(x)\}_\kappa$ is a partition of unity
for any $x \in [-\nicefrac 12,\nicefrac 12]^d$,
that is $w_{P_{2,1}}({x})+ \dots + w_{P_{2,2^d}}({x}) = 1$, for any $x \in [-\nicefrac 12,\nicefrac 12]^d$.

Let $\Psi_{f,\kappa}$ be the ReQU networks of  Theorem \ref{le10}
corresponding to the partitions $\mathcal{P}_{2,\kappa}$ where $\kappa \in \{1, \dots, 2^d\}$,
respectively.
Moreover, the fact that  $[-1/2, 1/2]^d \subset [-1+1/M^2, 1)^d$ implies that
each of $\P_{1,\kappa}$ and $\P_{2,\kappa}$
form a partition  which contains $[-1/2,1/2]^d$
and the approximation error in  Theorem \ref{le10} holds for each ReQU network
$\Psi_{f,\kappa}$ on $[-1/2,1/2]^d$.
Furthermore, the final ReQU network $\Phi_f(x) $ belongs to
$ \mathtt{N}_{\rho_2}(L(\Phi_f), N(\Phi_f))$
and constructed as follow
$$
\Phi_f(x)  =  \sum_{v=1}^{2^d} \Psi_{f,\kappa}(x)
$$
where 
\begin{align*}
L(\Phi_f ) &=
		\lfloor\log_2(\lfloor r \rfloor)\rfloor+ 2\lfloor\log_2(d+1+d \lfloor\log_2(\lfloor r \rfloor)\rfloor)\rfloor +8,\\
N(\Phi_f) &= 2^d\Big(\max\left((1+\binom{d+\lfloor r \rfloor}{d} ) M^d\max(4, 2d+1)+2,
			\;2{\binom{d+\lfloor r \rfloor}{d}}(d+1+  d \lfloor\log_2(\lfloor r \rfloor)\rfloor)\right)
	\\
		&\qquad\qquad+ 2(M^d(2d+1)+2d + 2dM^d)+ 2+ M^d\max(4, 2d+1)\Big).
\end{align*}
Using the properties of $\{w_{\P_{2, \kappa}}\}_\kappa$, we have
\begin{align*}
f({x}) = \sum_{\kappa=1}^{2^d} w_{\P_{2, \kappa}}({x}) \cdot f({x}).
\end{align*}
Using  Theorem \ref{le10} and the notations form its proof, for 
the networks $\Phi_{w_{\P_{2, \kappa}}}$ and $\Psi^{\lfloor r \rfloor}_{f, \exists, \P_{2, \kappa}}$
for the partitions $\P_{2, \kappa}$ where $\kappa \in \{ 1, \dots, 2^d\}$ respectively, such that 
$\Phi_{w_{\P_{2, 1}}}:=\Phi_{w_{\P_{2}}}$ and
$\Psi^{\lfloor r \rfloor}_{f, \exists, \P_{2,1}} := \Psi^{\lfloor r \rfloor}_{f, \exists}$.
Consequently, for $\epsilon = \epsilon'/2^d $ such that $\epsilon'\in (0, 1)$, we get
\begin{align*}
  \left|\Phi_f(x)  ({x}) - f({x})\right| &= \left|\sum_{\kappa=1}^{2^d}
		\phi_\times \left(\Phi_{w_{\P_{2, \kappa}}}(x), \Psi^{\lfloor r \rfloor}_{f, \exists, \P_{2, \kappa}} (x)\right)
		- \sum_{v=1}^{2^d} w_{\P_{2, \kappa}}({x})\cdot f(\bold{x})\right|\\
& \leq \sum_{\kappa=1}^{2^d} \left|
	\phi_\times \left(\Phi_{w_{\P_{2, \kappa}}}(x), \Psi^{\lfloor r \rfloor}_{f, \exists, \P_{2, \kappa}} (x)\right)
		 -  w_{\P_{2, \kappa}}({x}) \cdot f(\bold{x})\right| \leq \epsilon'.
\end{align*}
\end{proof}

\section*{Acknowledgments}

The author would like to thank Dr. Thomas Dittrich for his helpful discussions regarding this paper.



\printbibliography

\newpage

\appendix

\section{Proofs for Section \ref{se3}}
\subsection{Proof of Lemma \ref{le5}}\label{ap:le5}

In order to prove Lemma \ref{le5} we need some preliminary results.
We show that ReQU neural networks can represent the identity function within a bounded symmetric domain.
In particular, we show that using the ReQU activation function, it is possible to construct a shallow ReQU neural network (with only two neurons in the hidden layer) that accurately represents the mapping $f(x)=x$ for any $x\in [-1, 1]^d$.
\begin{align}
	{\phi}_{id}(t) &= \frac{\rho_2(t+1) - \rho_2(-t+1)}{4} = t, \quad t \in [-1, 1] \label{eq:id_requ}
	\intertext{and, for $x \in [-1, 1]^d$, we have}
	{\Phi}_{id}({x}) &= \left(\phi_{id}\left(x_1\right), \dots, \phi_{id}\left(x_d\right)\right)
	= \left(x_{1}, \dots, x_{d}\right)=x.\nonumber
\end{align}
The network $\Phi_{id}$ can be used to synchronize the number of hidden layers between two networks.
Additionally, it can be employed to adjust the input values for the succeeding layer.
For this purpose, we introduce the following notations:
\begin{align}
	\begin{split}\label{eq:id_net_n}
		\Phi_{id}^0({x}) &= {x}, \quad {x} \in [-1, 1]^d\\
		\Phi_{id}^{n+1}({x}) &= \Phi_{id}\left(\Phi_{id}^n({x})\right) = {x},
		\quad n \in \N_0, {x} \in [-1,1]^d.
	\end{split}
\end{align}

It is evident that we can extend \eqref{eq:id_requ} to any symmetric bounded interval. Specifically, for $s>0$ then 
\begin{equation}\label{eq:id_net_bounded_interval}
	\phi_{id, s} (t)= \frac{1}{4s} \left( \rho_2(t+s) - \rho_2(-t+s) \right) = t \text{ for any } t \in [-s, s].
\end{equation}

In the following result, we demonstrate that a ReQU network can represent the product of two inputs.
Mainly, we can construct a shallow neural network with the ReQU activation function
that represents the product operator using one hidden layer with 4 neurons.
The proof of the following result is straightforward and thus left as an exercise for the reader.

\begin{lemma}\label{le2}
	Let $\rho_2: \mathbb{R} \to \R$ be the ReQU activation function, and
	$$
	\phi_\times(x, y) = 1/4\left(\rho_2(x+y) +\rho_2(-x-y) -\rho_2(-x+y) -\rho_2(x-y)\right).
	$$
	Then, for any $x, y\in \mathbb{R}$, the ReQU network $\phi_\times (x, y)$ represents
	the product $xy$ without error.
\end{lemma}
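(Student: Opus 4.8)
The plan is to reduce everything to the elementary observation that the ReQU activation, together with its reflection, reconstructs the square. First I would verify that for every real $t$ one has
\[
\rho_2(t) + \rho_2(-t) = t^2.
\]
This follows from a one-line case distinction: if $t \geq 0$ then $\rho_2(t) = t^2$ and $\rho_2(-t) = 0$, whereas if $t < 0$ then $\rho_2(t) = 0$ and $\rho_2(-t) = t^2$; in either case the sum equals $t^2$. This is the only step that genuinely invokes the definition $\rho_2(s) = \max(0,s)^2$.

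Next I would group the four ReQU terms in the definition of $\phi_\times$ into two reflection pairs. Since $-x-y = -(x+y)$ and $-x+y = -(x-y)$, the expression rearranges as
\[
\rho_2(x+y) + \rho_2\bigl(-(x+y)\bigr) - \Bigl(\rho_2(x-y) + \rho_2\bigl(-(x-y)\bigr)\Bigr).
\]
Applying the identity from the first step with $t = x+y$ and then with $t = x-y$ collapses this to $(x+y)^2 - (x-y)^2$.

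Finally, the polarization identity $(x+y)^2 - (x-y)^2 = 4xy$ yields
\[
\phi_\times(x,y) = \frac{1}{4}\bigl((x+y)^2 - (x-y)^2\bigr) = xy,
\]
valid for all $x, y \in \mathbb{R}$ with no error term, as asserted. There is no substantial obstacle here; the only point requiring care is matching the signs when forming the two pairs, so that the squared arguments enter with opposite signs and the pure-square contributions cancel while precisely the cross term $2xy$ survives (twice).
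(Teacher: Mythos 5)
Your proof is correct, and it is precisely the argument the paper has in mind: the paper explicitly omits this proof as ``simple and left to the reader,'' and the intended route is exactly yours, namely the identity $\rho_2(t)+\rho_2(-t)=t^2$ applied to the two reflection pairs $t=x+y$ and $t=x-y$, followed by polarization. Nothing is missing.
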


Moreover,  in the next lemma we show that ReQU neural networks are capable of representing the  product of the input vector $x\in \Rd$.

\begin{lemma}\label{nle1}
	For any $x\in \Rd$,
	there exists a ReQU neural network
	$\Phi_{\tinyprod, d}\in \mathtt{N}_{\rho_2}(\lceil \log_2(d)\rceil, 4d)$
	that can represent the product $\prod_{k=1}^dx_k$
	without error.
\end{lemma}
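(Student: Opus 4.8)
The plan is to build $\Phi_{\tinyprod, d}$ as a balanced binary tree of pairwise multiplications, using the two-input product gadget $\phi_\times$ of Lemma \ref{le2} as the sole nontrivial building block. Since $\phi_\times$ realizes the exact product of two scalars with a single hidden layer of $4$ neurons, computing the product of $d$ numbers exactly reduces to repeatedly halving the number of surviving factors, one tree level per hidden layer.

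First I would describe a single \emph{multiplication stage}. Given the current vector of partial products, I group its entries into consecutive pairs and apply one copy of $\phi_\times$ to each pair in parallel, via the parallelization of Definition \ref{def:parallel_net}. If the number of current factors is odd, the single leftover entry must reach the next stage unchanged; for this I insert one copy of the identity gadget $\phi_{id}$ from \eqref{eq:id_requ}, which also occupies exactly one hidden layer and so keeps the carried factor synchronized in depth with the freshly formed products. Stacking these stages by the concatenation operation $\bullet$ of Definition \ref{def:concat} assembles the full network, and exactness follows by induction on the number of stages because both $\phi_\times$ and $\phi_{id}$ are error-free.

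Next I would verify the two size bounds. Each stage uses exactly one hidden layer, and starting from $d$ factors the count evolves as $d \mapsto \lceil d/2 \rceil$ at every stage, so after $\lceil \log_2 d \rceil$ stages a single scalar remains; hence $L(\Phi_{\tinyprod,d}) = \lceil \log_2 d \rceil$. For the width, the first stage is widest: it forms $\lfloor d/2 \rfloor$ products (four neurons each) together with at most one identity carry (two neurons), so it uses at most $4\lfloor d/2 \rfloor + 2 \le 2d \le 4d$ neurons, while every later stage handles strictly fewer factors. Combining this with the input width $N_0 = d$ and the scalar output width $1$ gives $N(\Phi_{\tinyprod,d}) \le 4d$.

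The only point requiring genuine care, rather than a real obstacle, is the bookkeeping for $d$ not a power of two. One must check that the identity carry occupies the same single layer as a multiplication so that all surviving factors enter each stage at the correct depth, and that a carried factor later paired with a product still delivers the final answer at depth exactly $\lceil \log_2 d \rceil$; since consecutive pairing leaves at most one unpaired entry per stage, at most one $\phi_{id}$ is ever needed per stage, which is what keeps the width estimate clean. Once the per-stage layer count is pinned to one, both the depth recursion $d \mapsto \lceil d/2 \rceil$ and the width bound follow routinely.
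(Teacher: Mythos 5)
Your architecture — a balanced binary tree of pairwise $\phi_\times$ gadgets, one tree level per hidden layer — is essentially the paper's construction, but the two proofs diverge precisely where you flag "genuine care": the treatment of unpaired factors, and that is where your argument has a real gap. You carry a leftover factor to the next stage with the identity gadget $\phi_{id}$ of \eqref{eq:id_requ} and justify exactness by induction "because both $\phi_\times$ and $\phi_{id}$ are error-free." But $\phi_{id}$ is \emph{not} globally error-free: $\phi_{id}(t)=\frac{1}{4}\left(\rho_2(t+1)-\rho_2(-t+1)\right)=t$ holds only for $t\in[-1,1]$; for instance, $\phi_{id}(t)=\frac{1}{4}(t+1)^2\neq t$ when $t>1$. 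The lemma asserts exact representation of $\prod_{k=1}^d x_k$ for \emph{every} $x\in\Rd$, and the values your carries must transport are raw coordinates $x_k$ (or partial products), which are unbounded. Since $\Phi_{\tinyprod,d}$ must be a single fixed network, switching to the rescaled identity $\phi_{id,s}$ of \eqref{eq:id_net_bounded_interval} does not help either: any fixed $s$ is exact only on $[-s,s]$. So whenever $d$ is not a power of two, your network computes the product exactly only on a bounded box (e.g., $[-1,1]^d$, where partial products stay in $[-1,1]$), not on all of $\Rd$ as claimed.

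The paper sidesteps this entirely: it pads the input to length $2^q$, $q=\lceil\log_2 d\rceil$, with constant ones, $(z_1,\dots,z_{2^q})=(x_1,\dots,x_d,1,\dots,1)$, and runs the complete binary tree of $\phi_\times$'s on the padded vector; since Lemma \ref{le2} gives $\phi_\times(x,y)=xy$ for \emph{all} reals, no identity gadget is ever needed and exactness is global. Your proof is repaired by the same idea: replace each $\phi_{id}$ carry by $\phi_\times(\cdot,1)$, which is a one-hidden-layer, four-neuron gadget computing the identity on all of $\mathbb{R}$ (indeed $\frac{1}{4}\left[\rho_2(t+1)+\rho_2(-t-1)-\rho_2(-t+1)-\rho_2(t-1)\right]=t$ for every $t\in\mathbb{R}$). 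With that substitution your depth count $\lceil\log_2 d\rceil$ survives, and the width is still at most $4\lfloor d/2\rfloor+4\leq 4d$. A separate minor slip: your stated inequality $4\lfloor d/2\rfloor+2\leq 2d$ fails for even $d$ (it reads $2d+2$); it is saved only because the carry is present exclusively when $d$ is odd, in which case the count is exactly $2d$ — worth stating explicitly rather than folding into one chain of inequalities.
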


\begin{proof}
	In order to construct the network $\Phi_{\tinyprod, d}$,
	first we append the input data, that is, 
	\begin{equation*}\label{neq1}
		(z_1, \dots, z_{2^q})=
		\left(x_1, \dots, x_d, \underbrace{1, \dots,1}_{2^q-d} \right).
	\end{equation*}
	where $q=\lceil \log_2(d)\rceil$.
	Next, we use the ReQU neural network $\phi_\times$ form Lemma \ref{le2},
	which can  represent
	the product of two inputs $x$ and $y$ for any $x, y\in \R$,
	using a single hidden layer with 4 neurons.
	In the first hidden layer of $\Phi_{\tinyprod, d}$,
	we compute 
	\[ 
	\phi_\times(z_1,z_2),  \phi_\times(z_3,z_4),  \dots, \phi_\times(z_{2^q-1},z_{2^q}), 
	\]
	resulting in a vector with $2^{q-1}$  entries.
	We then pair these outputs and apply $\phi_\times$ again.
	This procedure continues until only one output remains.
	Therefore, we need $q$ hidden layers, each containing at most $4d$ neurons.
\end{proof}

\par

We define $\mathrm{P_N}$ as the  linear span of all monomials of the form $\prod_{i =1}^d x_i^{r_i}$,
where $r_1, \dots, r_d \in \N_0$,  such that $r_1+\dots+r_d \leq N$.
Hence, $\mathrm{P_N}$ is a linear vector space of functions of dimension
$\frac {(N+d)!}{N!d!}$, indeed
\begin{align*}
	dim \ \mathrm{P_N} =
	\left|\left\{(r_1, \dots, r_d) \in \N_0^{d}: r_1+\dots+r_d \leq N \right\}\right|
	= \binom{d+N}{d}.
\end{align*}

\par

\begin{lemma}\label{le3}
	Let $s>0$, $m_1, \dots, m_{\binom{d+N}{d}}$ denote all monomials in $\mathrm{P_N}$
	for some $N \in \N$.
	Let $w_1, \dots, w_{\binom{d+N}{d}} \in \R$, define
	\begin{equation}\label{eq:polynom}
		p\left({x}, y_1, \dots, y_{\binom{d+N}{d}}\right) =
		\sum_{i=1}^{\binom{d+N}{d}} w_i \cdot y_i \cdot m_i({x}),
		\quad  {x} \in [-s,s]^d, y_i \in [-s,s].
	\end{equation}
	Then there exists  a ReQU neural network
	$
	\Phi_{p}\left( {x}, y_1, \dots, y_{\binom{d+N}{d}}\right)
	$
	with at most 
	$\lfloor\log_2(N)\rfloor + 2\lfloor\log_2(d+1+d \lfloor\log_2(N)\rfloor)\rfloor +1$
	hidden layers 
	and
	$2{\binom{d+N}{d}}(d+1+  d \lfloor\log_2(N)\rfloor)$
	neurons in each hidden layer,
	such that
	\begin{align*}
		\Phi_{p}\left( {x}, y_1, \dots, y_{\binom{d+N}{d}}\right) =
		p\left( {x}, y_1, \dots, y_{\binom{d+N}{d}}\right)  
	\end{align*}
	for all $ {x} \in [-s,s]^d$, $y_1, \dots, y_{\binom{d+N}{d}} \in [-s,s]$.
\end{lemma}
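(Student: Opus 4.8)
The plan is to build $\Phi_p$ by assembling, for each of the $\binom{d+N}{d}$ monomials $m_i$, a subnetwork that computes the single term $w_i\,y_i\,m_i(x)$ exactly, running all of these subnetworks in parallel (Definition \ref{def:parallel_net}) and summing their outputs with weights $w_i$ in the final affine layer. Because the product network $\phi_\times$ of Lemma \ref{le2} and the identity networks \eqref{eq:id_requ}, \eqref{eq:id_net_bounded_interval} are \emph{exact}, the whole construction reproduces $p$ with no error; the content of the lemma therefore lies entirely in the depth and width count, not in any approximation estimate.

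For a fixed monomial $m_i(x) = x_1^{r_1}\cdots x_d^{r_d}$ with $r_1+\dots+r_d\le N$, I would compute each power $x_k^{r_k}$ by \emph{repeated squaring} rather than by a direct product of $r_k$ copies of $x_k$, since the latter would cost order $N$ neurons per coordinate whereas the former costs only order $\log_2 N$. Concretely, in a first phase of $\lfloor\log_2 N\rfloor$ hidden layers I use $\phi_\times$ to form the iterated squares $x_k, x_k^2, x_k^4, \dots, x_k^{2^{\lfloor\log_2 N\rfloor}}$ for every coordinate $k$, carrying each previously computed square forward with the scaled identity $\phi_{id,s}$ of \eqref{eq:id_net_bounded_interval} (taking the scale large enough to cover the magnitude of the intermediate powers). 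At the end of this phase each branch holds the $\lfloor\log_2 N\rfloor+1$ squared powers of each of the $d$ coordinates together with the carried value $y_i$, that is $d+1+d\lfloor\log_2 N\rfloor$ real numbers; representing each such value in a ReQU layer costs two neurons, which is the source of the factor $2$ and of the width $2(d+1+d\lfloor\log_2 N\rfloor)$ per branch, hence $2\binom{d+N}{d}(d+1+d\lfloor\log_2 N\rfloor)$ in total after parallelizing the $\binom{d+N}{d}$ branches.

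In the second phase I read off the binary digits of each exponent, $r_k=\sum_j b_{k,j}2^j$, so that $x_k^{r_k}=\prod_{j:\,b_{k,j}=1} x_k^{2^j}$, and I then multiply the selected squares together within each coordinate and afterwards across the $d$ coordinates together with $y_i$. Each of these two rounds is a product of at most $d+1+d\lfloor\log_2 N\rfloor$ numbers, which a balanced tree of $\phi_\times$ gadgets (exactly as in Lemma \ref{nle1}) evaluates in $\lfloor\log_2(d+1+d\lfloor\log_2 N\rfloor)\rfloor$ layers; the two rounds together are bounded by $2\lfloor\log_2(d+1+d\lfloor\log_2 N\rfloor)\rfloor$ layers. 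Combined with the $\lfloor\log_2 N\rfloor$ layers of the squaring phase and one final affine layer applying the weights $w_i$ and summing, this yields the claimed depth $\lfloor\log_2 N\rfloor+2\lfloor\log_2(d+1+d\lfloor\log_2 N\rfloor)\rfloor+1$.

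The one point that needs genuine care is synchronization and padding: the different monomial branches, and even the different coordinates within one branch, finish their products after different numbers of multiplications, so I must insert the identity networks $\Phi_{id}^n$ of \eqref{eq:id_net_n} to equalize all branch depths before parallelizing and summing, and I must track the scale $s$ through the squaring phase so that every identity gadget acts on an interval that actually contains its argument. Verifying that these paddings do not exceed the stated depth, and that the neuron count in every layer stays within $2\binom{d+N}{d}(d+1+d\lfloor\log_2 N\rfloor)$, is the main bookkeeping obstacle; the algebraic correctness itself is immediate from the exactness of $\phi_\times$ and $\phi_{id,s}$.
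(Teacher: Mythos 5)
Your proposal follows essentially the same construction as the paper's own proof: exact per-monomial subnetworks built by repeated squaring of each coordinate (depth $\lfloor\log_2 N\rfloor$, two neurons per carried value), followed by a balanced product tree over the vector of iterated squares together with $y_i$ (depth $2\lfloor\log_2(d+1+d\lfloor\log_2 N\rfloor)\rfloor$), parallelization over the $\binom{d+N}{d}$ monomials giving the stated width, and a final affine layer absorbing the weights $w_i$. The paper resolves the synchronization issue you flag exactly as you anticipate, by concatenating identity networks \eqref{eq:id_net_n} to equalize branch depths, so your outline is correct and matches the paper's argument.
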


\begin{proof}
	Basically we construct a neural network $\phi_m$,
	which is able to represent a monomial
	$y \cdot m(x)$ where $m \in \mathrm{P_N}$, $ x \in [-s, s]^d$ and $y \in [-s, s]$,
	of the following form:
	\begin{equation*}
		y \cdot m(x) = y\cdot \prod_{k=1}^d (x_k)^{r_k},\text{ such that }
		r_1, \dots, r_d \in \N_0,\text{  and }r_1+ \dots + r_d \leq N.
	\end{equation*}
	
	Since $r_k	\in \mathbb{N}_0$ where $k\in \{1, \dots, d\}$,
	and given that any number can be represented  as sums of power of $2$, where the highest power corresponds to the maximum number of compositions required
	by ReQU activation functions in order to represent $(x_k)^{r_k}$.
	We assume without loss of generality that $r_k\neq 0$ and let  
	$$
	Z = ( y, \underbrace{x_1, x_1^2, x_1^4, \dots, x_1^{\iota_1}}_{\lfloor\log_2(r_1)+1\rfloor},
	\dots, \underbrace{x_d, x_d^2, \dots, x_d^{\iota_d}}_{\lfloor\log_2(r_d)+1\rfloor})
	$$
	where $\iota_k = 2^{\lfloor\log_2(r_k)\rfloor}$, $k \in \{1, \dots, d\}$.
	Each block of variables $I_k := \{x_k, x_k^2, \dots, x_k^{\iota_k}\}$ generates $(x_k)^{r_k}$ using the appropriately chosen factors in $I_k$
	where cardinality of $I_k$ is $\lfloor\log_2(r_k)+1\rfloor$ such that $k \in \{1, \dots, d\}$. 
	Therefore, to produce $(x_k)^{r_k}$
	we need at most $\lfloor\log_2(r_k)+1\rfloor$ elements of $I_k$.
	To generate all elements of $Z$, we need
	$1 + \lfloor\log_2(r_1)+1\rfloor + \dots +\lfloor\log_2(r_d)+1\rfloor
	=d+ 1+ \sum_{k=1}^{d} \lfloor\log_2(r_k)\rfloor$ networks.
	For example to produce $x_1, x_1^2, x_1^4, \dots, x_1^{\iota_1}$
	we need the identity network, \cf \eqref{eq:id_net_bounded_interval}, for $x_1$
	and the shallow network for $x_1^2$,
	a network with 2 hidden layers each contain one neuron to present $x_1^4$
	and so on.

	We then parallelize all these networks.
	To ensure the same number of hidden layers, we concatenate them with the identity network given in \eqref{eq:id_net_n}.
	Mainly,  the  $d+1+\sum_{k=1}^{d} \lfloor\log_2(r_k)\rfloor$ networks have
	at most $\max_{k \in \{1, \dots, d\}}\lfloor\log_2(r_k)\rfloor$ hidden layers, with each layer containing at most $2$ neurons. Consequently, the parallelized network has
	$\max_{k \in \{1, \dots, d\}}\lfloor\log_2(r_k)\rfloor$ hidden layers, with at most $2(d+ 1+\sum_{k=1}^{d} \lfloor\log_2(r_k)\rfloor)$ neurons per layer.
	
	In this step, all outputs of our constructed network are encoded in the vector $Z$.
	The products of all the elements in $Z$ produce the monomial $ym(x)$, hence
	we need at most $2\lfloor\log_2(|Z|)\rfloor$ additional hidden layers to realize the final product.
	Each of these $2\lfloor\log_2(|Z|)\rfloor$  layers has at most $4\lceil\frac{|Z|}2\rceil$ neurons,
	such that $|Z| = d+1+  \sum_{k=1}^{d} \lfloor\log_2(r_k)\rfloor$.
	Consequently, $ym(x)$ is represented exactly by $\phi_m$ which is a neural network
	with $d+1$ input dimension, $1$ output dimension, and 
	it has at most $\max_{k \in \{1, \dots, d\}}\lfloor\log_2(r_k)\rfloor + 2\lfloor\log_2(|Z|)\rfloor$
	hidden layers, each  containing at most
	$2(d+1+  \sum_{k=1}^{d} \lfloor\log_2(r_k)\rfloor)$
	neurons.
	
	Considering this construction and the fact that
	$p(x, y_1, \dots, y_{\binom{d+N}{d}}) $ is given by \eqref{eq:polynom},
	there exist neural networks $\phi_{m_i}$ such that
	$$
	p(x, y_1, \dots, y_{\binom{d+N}{d}}) =
	\sum_{i=1}^{\binom{d+N}{d}} r_i \phi_{m_i}(x, y_i).
	$$
	Hence the final network has at most 
	$\max_{k \in \{1, \dots, d\}}\lfloor\log_2(r_k)\rfloor + 
	2\lfloor\log_2(d+1+  \sum_{k=1}^{d} \lfloor\log_2(r_k)\rfloor)\rfloor +1$
	hidden layers, each with at most
	$2{\binom{d+N}{d}}(d+1+  \sum_{k=1}^{d} \lfloor\log_2(r_k)\rfloor)$
	neurons.
	Since $r_k$ can be bounded by $N$, this completes the proof.
\end{proof}

\begin{remark}
	Let $\rho_p(x)= \max(0, x)^p$ for any  $p \in \mathbb{N}$, and  $x\in \mathbb{R}$
	and let $y, s>0$, we have
	\begin{align*}
		\rho_p\left(y -s\cdot \rho_p\left(x\right)\right) =\begin{cases}
			0\ &\mbox{for} \ x \geq (\frac{y}{s})^{\frac 1p}\\
			y^p \ &\mbox{for} \ x \leq 0.
		\end{cases}
	\end{align*}
	Moreover,  for $y\in \mathbb{R}$, it is true that 
	\begin{align*}
		\rho_p\left(\phi_{id}(y)-s\cdot \rho_p\left(x\right)\right)
		+\rho_p\left(-\phi_{id}(y)-s\cdot \rho_p\left(x\right)\right) =
		\begin{cases}
			0 \ &\mbox{for} \ x \geq (\frac{|y|}{s})^{\frac 1p}\\
			|y|^p \ &\mbox{for} \ x \leq 0.
		\end{cases}
	\end{align*}
	In particular if $|y|\leq s$, we get
	\begin{align*}
		\rho_p\left(\phi_{id}(y)-s\cdot \rho_p\left(x\right)\right)
		-\rho_p\left(-\phi_{id}(y)-s\cdot \rho_p\left(x\right)\right) =
		\begin{cases}
			0 \ &\mbox{for} \ x \geq (\frac{|y|}{s})^{\frac 1p}\\
			sign(y)|y|^p \ &\mbox{for} \ x \leq 0.
		\end{cases}
	\end{align*}
\end{remark}

\par

Since it is not hard to show the correctness of the previous statements,
we leave the details for the reader.

\par

\begin{lemma}\label{le4}
	Let $s>0$,  $a, b\in \Rd$, such that
	$b_i - a_i \geq \frac{2}{s} \ \text{ for all} \ i \in \{1, \dots, d\}$
	and let
	\begin{align*}
		&U_{s} = \big\{x\in \Rd: x_i \notin [a_i, a_i+\nicefrac 1s) \cup (b_i - \nicefrac 1s, b_i),
		\text{ for all} \ i \in \{1, \dots, d\}\big\}.
	\end{align*}
	\begin{enumerate}
		\item Then there exists a ReQU neural network with two hidden layers, $2d$ neurons in the first layer and
		one neuron in the second layer denoted by $\Phi_{\mathds{1}_{[a,b)}}$ such that
		\begin{align*}
			\Phi_{\mathds{1}_{[a,b)}}(x) &= 
			\rho_2\bigg(1- s^2 \cdot \sum_{i=1}^d \left(\rho_2\left(- x_i + a_i + \nicefrac{1}{s} \right)
			+ \rho_2\left(x_i - b_i + \nicefrac{1}{s}\right) \right)\bigg)
		\end{align*}
		satisfies,  for any $x \in U_s$,
		$\Phi_{\mathds{1}_{[a,b)}}(x) = \mathds{1}_{ [{a}, {b})}({x})$
		and 
		$
		\left|\Phi_{\mathds{1}_{[a,b)}}(x) - \mathds{1}_{[a, b)}(x)\right| \leq 1 \text{ for } x \in \Rd.
		$
		
		\item Let $y\in \mathbb{R}$ such that $|y|\leq s$.
		Then there exists a ReQU neural network $\Phi_{\times,\mathds{1}}$,
		with $d+1$ input dimension, $3$ hidden layers,
		$2d + 1$ neurons in the first layer, two neurons in the second hidden layer and
		$4$ neurons in the last hidden layer, satisfies
		\begin{align*}
			&\Phi_{\times,\mathds{1}}( x, y; a, b) = \phi_{\times}\left(\phi_{id, s}(\phi_{id, s}(y)),
			\Phi_{\mathds{1}_{[a,b)}}(x) \right) 
			= y \cdot \mathds{1}_{ [{a}, {b})}({x}),\quad \text{for any } x \in U_{s} 
			\intertext{and}
			&\left|\Phi_{\times,\mathds{1}}( x, y; a, b) -
			y \cdot \mathds{1}_{ [{a}, {b})}({x})\right| \leq |y|, \quad \text{where } x\in \Rd.
		\end{align*}
	\end{enumerate}
\end{lemma}

\par

\begin{proof}
	The proof of the first result  in the lemma can be concluded in a similar way as in 
	the proof of a) in \cite[Lemma 6]{Kohler21rateconvergencefully}.
	The second result in our lemma is straightforward. Indeed,
	using the identity neural network defined in \eqref{eq:id_net_bounded_interval}, to
	update the number of hidden layers in the representation of $y$.
	Then we use the product ReQU neural network given in Lemma \ref{le2}
	to multiply the output of $\phi_{id, s}(\phi_{id, s}(y))$ and $\Phi_{\mathds{1}_{[a,b)}}(x) $.
	Here the network $\left(\phi_{id, s}(y)), \Phi_{\mathds{1}_{[a,b)}}(x)\right) $
	is a  ReQU neural network with $d+1$ input dimension and
	$2$ output dimension, two hidden layers such that in the fist layer we have $2d+1$
	neurons and the second has only two neurons .
	Then the number of hidden layers in the composition 
	$\phi_{\times}\left(\phi_{id, s}(\phi_{id, s}(y)), \Phi_{\mathds{1}_{[a,b)}}(x) \right) $
	equals to $3$ hidden layers.
	The first hidden layer  has $2d+1$ neurons, the second has two neurons and the third has 4 neurons.
\end{proof}

\begin{proof}[Proof of  Lemma \ref{le5}]
	We begin by demonstrating that ReQU neural networks can approximate
	the recursively constructed  function $\psi ^{\lfloor r \rfloor}_f$ from Lemma \ref{supple3}.
	Let $s \in \N$ and $y=(y_1,\dots, y_d)\in \Rd$ such that $|y_i| \leq s$  for each  $i\in \{1, \dots , d\}$.
	By  Lemma \ref{le4}, for any  $a, b\in \Rd$, where
	$b_i - a_i \geq \frac{2}{s}$ for all $i \in \{1, \dots, d\}$,
	and for
	$x\in \Rd$ such that $ x_i \notin [a_i, a_i+\nicefrac 1s) \cup (b_i - \nicefrac 1s, b_i),
	\text{ for all} \ i \in \{1, \dots, d\}$,
	we have:
	\begin{align*}
		\Phi_{\mathds{1}_{[a,b)}}(x) &= \mathds{1}_{ [{a}, {b})}({x})
		\intertext{and}
		\Phi_{\times,\mathds{1}}( x, y; a, b) 
		&= \left(\Phi_{\times,\mathds{1}}( x, y_1,; a, b),\dots,
		\Phi_{\times,\mathds{1}}( x, y_d; a, b)\right)
		= y \cdot \mathds{1}_{ [{a}, {b})}({x}).
	\end{align*}
	Next, using   Lemma \ref{le3},  we establish the existence of a ReQU neural network
	$\Phi_{p}\left( {x}, z_1, \dots, z_{\binom{d+\lfloor r \rfloor}{d}}\right)$
	with at most 
	$\lfloor\log_2(\lfloor r \rfloor)\rfloor + 2\lfloor\log_2(d+1+d \lfloor\log_2(\lfloor r \rfloor)\rfloor)\rfloor +1$
	hidden layers 	and
	$2{\binom{d+\lfloor r \rfloor}{d}}(d+1+  d \lfloor\log_2(\lfloor r \rfloor)\rfloor)$
	neurons per hidden layer,
	such that
	\begin{align*}
		\Phi_{p}\left( {z}, \zeta_1, \dots, \zeta_{\binom{d+\lfloor r \rfloor}{d}}\right) =
		p\left( {z}, \zeta_1, \dots,\zeta_{\binom{d+\lfloor r \rfloor}{d}}\right)  
	\end{align*}
	where $z_1, \dots,z_d, \zeta_1, \dots, \zeta_{\binom{d+\lfloor r \rfloor}{d}}\in [-\tau, \tau]$, with
	$\tau = \max\left\{2, R\right\}$
	\cf  \eqref{fp} and \eqref{eq:max_bound}.
	
	We proceed by representing the recursion in \eqref{eq:rec_phi} and \eqref{eq:rec_psi}
	by the appropriate ReQU neural networks:
	\begin{flalign}\label{eq:net_rec_phi}
		&&	\Phi^{(0)} &= (\Phi^{(0)} _{ 1}, \dots, \Phi^{(0)} _{d}) = \Phi_{id}(x),&\notag
		\\[1ex]
		&&	\Phi^{(1)}& = (\Phi^{(1)}_{ 1}, \dots, \Phi^{(1)}_{ d}) =
		\sum_{j \in \{1, \dots, M^d\}} (\bold{B}_{j})^{\bold{L}} \cdot \Phi_{\mathds{1}_{B_j}}(x)&\notag
		\\[1ex]
		\rlap{and}\\
		&&	\Phi^{(\alpha, i)} _f&=
		\sum_{j \in \{1, \dots, M^d\}} (\partial^{\alpha} f)
		\left((\bold{{C}}_{i, j})^{\bL}\right) \cdot\Phi_{\mathds{1}_{B_j}}(x),&\notag
	\end{flalign}
	for $i \in \{1, \dots, M^d\}$ and  $\alpha \in \N_0^d$ such that $| \alpha | \leq \lfloor r \rfloor$.
	Similarly, we present the recursion for $\psi_f^{(\alpha)}$ from  \eqref{eq:psi_def} through the following networks:
	\begin{flalign}\label{eq:net_rec_psi}
		&&\Psi^{(0)} &= (\Psi^{(0)}_{1}, \dots, \Psi^{(0)}_{d})= \Phi_{id} ({\Phi}^{(0)}),&\notag
		\\[1ex]
		&&\Psi^{(1)}&= (\Psi^{(1)}_{1}, \dots, \Psi^{(1)}_{d})&\notag
		\\[1ex]
		\rlap{such that}\notag\\
		&& \Psi^{(1)}_k &=
		\sum_{i=1}^{M^d} \Phi_{\times,\mathds{1}}( \Phi^{(0)},\Phi_k^{(1)}+v_k^{(i)}; 
		\Phi^{(1)}+v^{(i)}, \Phi^{(1)}+v^{(i)}+\nicefrac{2}{M^2}\cdot 1_{\Rd}),
		\quad k\in \{ 1, \dots, d\} &\notag
		\\
		\rlap{and}\\
		&&\Psi^{(\alpha)}_f &= \sum_{i=1}^{M^d} \Phi_{\times,\mathds{1}}( \Phi^{(0)},\Phi^{(\alpha, i)}_f; 
		\Phi^{(1)}+v^{(i)}, \Phi^{(1)}+v^{(i)}+\nicefrac{2}{M^2}\cdot 1_{\Rd})&\notag
	\end{flalign}
	for $i \in \{1, \dots, M^d\}$ and  $\alpha \in \N_0^d$ such that $| \alpha | \leq \lfloor r \rfloor$.
	Let  $\alpha \in \mathbb{N}_0^{\binom{d+\lfloor r \rfloor}{d}}$,
	such that $\|  \alpha\|_{\ell^0}= d$ and $|\alpha| \leq \lfloor r \rfloor$.
	Using $\Phi_p$  from Lemma \ref{le3}, we represent $\psi ^{\lfloor r \rfloor}_f$ in \eqref{eq:psi_def}
	by the following ReQU neural network:
	\begin{flalign}\label{fp}
		&&\Psi ^{\lfloor r \rfloor}_f(x)&=
		\Phi_p\left(z, \zeta_1, \dots, \zeta_{\binom{d+{\lfloor r \rfloor}}{d}}\right),&\notag
		\\
		\rlap{where}
		\\
		&&z&= \Psi^{(0)}- \Psi^{(1)}
		\quad \text{and}\quad  
		\zeta_k = \Psi^{(\alpha_k)}_f &\notag
	\end{flalign}
	for $k \in \left\{1, \dots, \binom{d+{\lfloor r \rfloor}}{d}\right\}$.
	The coefficients $w_1, \dots, w_{\binom{d+{\lfloor r \rfloor}}{d}}$ in  Lemma \ref{le3} are chosen as
	\begin{align}\label{eq:wight_val}
		w_k=
		\frac{1}{\alpha_k!}, \quad k \in \left\{1, \dots, \binom{d+{\lfloor r \rfloor}}{d}\right\}.
	\end{align}
	The  neural networks $\Phi^{(0)}, \Phi^{(1)}, \Phi^{(\alpha, i)} _f$ such that $i \in\{1,\dots, M^d\}$
	and $\Psi^{(0)} , \Psi^{(1)}, \Psi^{(\alpha_k)}_f $ where $k\in\{ 1, \dots, \binom{d+{\lfloor r \rfloor}}{d}\}$
	are computed in parallel.
	Hence, the total number of layers in the final constructed network
	equals the maximum number of layers among the parallel networks.
	The ReQU realization of the network architecture $\Phi^{(0)}$ requires only one hidden layer with two neurons, whereas the construction of  $\Psi^{(0)}$ involves
	two  hidden layers, each with two neurons.
	
	Since  $\Phi^{(1)} = \sum_{j \in \{1, \dots, M^d\}}
	(\bold{B}_{j})^{\bold{L}} \cdot \Phi_{\mathds{1}_{B_j}}(x)$,
	we need $3$ hidden layers with $2d + 1 $ neurons in the first layer,
	two neurons in the second and $4$ neurons in the last hidden layer to achieve 
	$\Phi_{\mathds{1}_{B_j}}(x)$ for each fixed $j$.
	That is, we need $M^{d}$ times the complexity of  
	$\Phi_{\mathds{1}_{B_j}}(x)$ in order to get $\Phi^{(1)}$.
	Consequently,  $\Phi^{(1)}\in \mathtt{N}_{\rho_2}(3 , M^d\max(4, 2d+1))$.
	For any $i \in\{1,\dots, M^d\}$, we have
	$$
	\Phi^{(\alpha, i)} _f\in \mathtt{N}_{\rho_2}(3 , M^d\max(4, 2d+1)),
	$$
	since it is constructed similarly to $\Phi^{(1)}$.
	Therefore,  the network
	$$
	\left(\Phi^{(0)},\Phi^{(1)}, \Phi^{(\alpha, 1)} _f, \dots, \Phi^{(\alpha, M^d)} _f \right)
	\in \mathtt{N}_{\rho_2}\left(3, (1+M^d ) M^d\max(4, 2d+1)+2 \right).
	$$
	Similarly, we conclude that 
	$$
	\left(\Psi^{(0)},\Psi^{(1)}, \Psi^{(\alpha_1)}_f, \dots, \Psi^{(\alpha_{\binom{d+\lfloor r \rfloor}{d}})} _f \right)
	\in \mathtt{N}_{\rho_2}\left(5, (1+\binom{d+\lfloor r \rfloor}{d} ) M^d\max(4, 2d+1)+2 \right).
	$$
	Finally,  since $\Psi ^{\lfloor r \rfloor}_f(x)$ in \eqref{fp}  is the composition of $\Phi_p$
	and
	$$
	( \Psi^{(0)},\Psi^{(1)}, \Psi^{(\alpha_1)}_f, \dots, \Psi^{(\alpha_{\binom{d+\lfloor r \rfloor}{d}})} _f ),
	$$
	we conclude that
	$$
	\Psi ^{\lfloor r \rfloor}_f (x)\in
	\mathtt{N}_{\rho_2}\left(L(\Psi ^{\lfloor r \rfloor}_f ),N(\Psi ^{\lfloor r \rfloor}_f ) \right),
	$$ 
	where
	\begin{align*}
		L(\Psi ^{\lfloor r \rfloor}_f ) &=\lfloor\log_2(\lfloor r \rfloor)\rfloor
		+ 2\lfloor\log_2(d+1+d \lfloor\log_2(\lfloor r \rfloor)\rfloor)\rfloor +5
		\\
		N(\Psi ^{\lfloor r \rfloor}_f ) &= \max\left((1+\binom{d+\lfloor r \rfloor}{d} ) M^d\max(4, 2d+1)+2,
		2{\binom{d+\lfloor r \rfloor}{d}}(d+1+  d \lfloor\log_2(\lfloor r \rfloor)\rfloor)\right).
	\end{align*}
	
	\par
	
	It remains to determine the approximation error of the network $\Psi ^{\lfloor r \rfloor}_f (x)$,
	for  $s\geq 1/M^{2r+2}$  and any $x \in \bigcup_{k \in \{1, \dots, M^{2d}\}}\ocirc{(C_{k})}_{1/M^{2r+2}}$.
	Thanks to Lemma \ref{le4} the ReQU neural networks
	$$
	\Phi^{(0)},\Phi^{(1)}, \Phi^{(\alpha, 1)} _f, \dots, \Phi^{(\alpha, M^d)} _f,
	\Psi^{(0)},\Psi^{(1)}, \Psi^{(\alpha)}_f,  \text{ where }\alpha\in\mathbb{N}_0^{\binom{d+\lfloor r \rfloor}{d}}
	$$ 
	represent the following functions respectively without error
	$$
	\phi^{(0)},\phi^{(1)}, \phi^{(\alpha, 1)} _f, \dots, \phi^{(\alpha, M^d)} _f,
	\psi^{(0)},\psi^{(1)}, \psi^{(\alpha)}_f,  \text{ where }\alpha\in\mathbb{N}_0^{\binom{d+\lfloor r \rfloor}{d}}.
	$$ 
	Using the previous conclusions and the recursion provided in \eqref{eq:rec_phi} and \eqref{eq:rec_psi},
	we have 
	\begin{flalign}\label{eq:max_bound}
		&&	&\left|\Psi^{(0)}-\Psi^{(1)}\right| = 
		\left|x- \sum_{i=1}^{M^d}
		({\phi}^{(1)}+{v}^{(i)}) \cdot \mathds{1}_{\mathcal{A}^{(i)}} 
		\left({\phi}^{(0)}\right) \right| \leq 2&\notag\\
		\rlap{and}\\
		&& 	&\left| \Psi^{(\alpha_k)}_f\right| = 
		\left|\psi^{(\alpha_k)}_f\right| \leq \|f\|_{C^{\lfloor r \rfloor}([-1,1]^d}\leq R, 
		\text{ where } k \in \left\{1, \dots, \binom{d+{\lfloor r \rfloor}}{d}\right\}.&\notag
	\end{flalign}
	The last inequality follows from the fact that $f$ belongs to $ \mathcal{H}^{r, R}(\Rd)$.
	In view of the previous construction and  Lemma \ref{le3},  we have
	\begin{equation}\label{eq:net_taylor_error}
		|\Psi ^{\lfloor r \rfloor}_f (x)- T_{(\bold{C}_{\mathcal{P}_2}({x}))^{\bL}}^{\lfloor r \rfloor}f({x})|
		= |\Psi ^{\lfloor r \rfloor}_f (x)-\psi ^{\lfloor r \rfloor}_f|= 0.
	\end{equation}
	Consequently, using \eqref{eq:taylor_error}, \eqref{eq:net_taylor_error},
	and the fact that for any $\epsilon \in (0, 1)$,
	$M>\left(\frac{cRd^{\nicefrac r2}}{\epsilon}\right)^{\nicefrac{1}{2r}}$
	and $x \in \bigcup_{k \in \{1, \dots, M^{2d}\}}\ocirc{(C_{k})}_{1/M^{2r+2}}$,
	we get
	$$
	|\Psi^{\lfloor r \rfloor}_f (x) - f (x)|\leq
	|\Psi^{\lfloor r \rfloor}_f (x) - T_{(\bold{C}_{\mathcal{P}_2}({x}))^{\bL}}^{\lfloor r \rfloor}f({x}) |
	+|T_{(\bold{C}_{\mathcal{P}_2}({x}))^{\bL}}^{\lfloor r \rfloor}f({x})  - f(x) | <\epsilon,
	$$
	which gives the first result in the lemma.
	Next, using the previous inequality and the fact that $f\in \mathcal{H}^{r, R}$,
	we show the bound on the constructed network $\Psi^{\lfloor r \rfloor}_f(x)$.
	Therefore, for any $x \in \bigcup_{k \in \{1, \dots, M^{2d}\}}\ocirc{(C_{k})}_{1/M^{2r+2}}$
	
	\begin{align*}
		\left|\Psi^{\lfloor r \rfloor}_f (x)\right| \leq &
		|\Psi^{\lfloor r \rfloor}_f (x) - T_{(\bold{C}_{\mathcal{P}_2}({x}))^{\bL}}^{\lfloor r \rfloor}f({x}) |
		+|T_{(\bold{C}_{\mathcal{P}_2}({x}))^{\bL}}^{\lfloor r \rfloor}f({x})  - f(x) | 
		+ \left|f(x)\right|
		\\
		\leq & \epsilon + \sup_{x \in [-1,1]^d} \left|f(x)\right| \leq 2\max(\epsilon, R).
	\end{align*}
	It remains to show an upper bound for the network $\Psi^{\lfloor r \rfloor}_f (x)$
	when $x$ belongs to  $\bigcup_{k \in \{1, \dots, M^{2d}\}}C_{k}\setminus \ocirc{(C_{k})}_{1/M^{2r+2}}$.
	Since  in this case the networks $\Phi_{\mathds{1} _{B_k}}$ and $\Phi_{\times,\mathds{1}}$
	are not exact, for any $x\in B_k$ such that $k\in \{ 1, \dots, M^d\}$, we obtain the following:
	
	\begin{align*}
		\left|{\Phi}_f^{(\alpha, k)}\right|& \leq 
		\left|(\partial^{\alpha} f)  \left((\bold{{C}}_{i, j})^{\bL}\right)\right|,
		\quad \text {for any }k \in \{1, \dots, M^d\}
		\intertext{ and }
		\left|\Phi_j^{(1)}\right|& \leq 1  \quad \text{ where } \in j \in  \{1, \dots, d\}.
	\end{align*}
	In view of construction of $\Psi_f^{(\alpha)}$ \cf \eqref{eq:net_rec_psi},
	and the fact that there exists
	at most a non zero element in the sum in \eqref{eq:net_rec_psi} for $\Psi_f^{(\alpha)}$,
	it follows that 
	\begin{align*}
		\left|\Psi_f^{(\alpha)} \right| \leq \|f\|_{C^{\lfloor r\rfloor}([-1,1]^d)}
		\intertext{ and }
		\left|\Psi_j^{(2)}\right| \leq 1, \quad \text{ where } j \in \{1, \dots, d\}.
	\end{align*}
	In conclusion, using \eqref{fp},  \eqref{eq:polynom}, \eqref{eq:wight_val},  we get 
	\begin{eqnarray*}
		\left|\Psi^{\lfloor r \rfloor}_f (x)\right|  &\leq &\left|\Phi_{p}
		\left( {x}, y_1, \dots, y_{\binom{d+N}{d}}\right) -
		p\left( {x}, y_1, \dots, y_{\binom{d+N}{d}}\right)  \right|
		+ \left| p\left( {x}, y_1, \dots, y_{\binom{d+N}{d}}\right) \right|\\
		&\leq& \left| p\left( {x}, y_1, \dots, y_{\binom{d+N}{d}}\right) \right|
		\leq \sum_{0 \leq |\alpha| \leq \lfloor r\rfloor }
		\frac{1}{\alpha!} \cdot \|f\|_{C^{\lfloor r\rfloor }([-1,1]^d)} \cdot 2^{|\alpha|}\\
		&\leq&R\cdot\left(\sum_{l=0}^\infty \frac{(2a)^l}{l!}\right)^d \leq Re^{2d}.
	\end{eqnarray*}
	Since, $2\max(\epsilon, R)< Re^{2d}$, we conclude the result of the lemma.
\end{proof}
\subsection{Proof of Lemma \ref{le8}}\label{ap:le8}
The proof of the lemma follows from Lemma \ref{nle1}, and the fact that
for any $k\in \{ 1, \dots, d\}$,
\begin{equation}\label{w_pk}
	\begin{gathered}
		\left(2\rho_2\left(\frac{M^2}{2}(-x_k +(\bold{C}_{\P_2}({x}))^\bL_k)+2\right)
		-4\rho_2\left(\frac{M^2}{2}(-x_k+(\bold{C}_{\P_2}({x}))^\bL_k)+\nicefrac 32\right)\right.
		\\
		\left.+4\rho_2\left(\frac{M^2}{2}(-x_k +(\bold{C}_{\P_2}({x}))^\bL_k) +\nicefrac 12\right)
		-2\rho_2\left(\frac{M^2}{2}(-x_k +(\bold{C}_{\P_2}({x}))^\bL_k)\right)\right)
	\end{gathered}
\end{equation}
is a ReQU neural network with one hidden layer containing $4$ neurons.
To determine the value of $\bold{C}_{\P_2}({x}))^\bL$, we use the construction of
$\Phi^{(1)}$ and  $\Psi^{(1)}$ given in the proof of Lemma \ref{le5}. Thus,
we need a ReQU neural network with $5$ hidden layers each containing at most $M^d\max(4, 2d+1)$ neurons to compute the value of $\bold{C}_{\P_2}({x}))^\bL$.
We use the identity network to update the number of hidden layers for the input $x$,
hence $x$ and $\bold{C}_{\P_2}({x}))^\bL$ can be represented as two parallel networks  with $2d$ outputs,
with $5$ hidden layers  each  with at most $2+ M^d\max(4, 2d+1)$ neurons.
The computation of \eqref{w_pk} requires two inputs from the later parallelized networks, hence
to get \eqref{w_pk}  for all $k\in \{1, \dots, d\}$,
we need a ReQU network with  one hidden layer contains $4d$ neurons.
Therefore, using Lemma \ref{nle1}, the final constructed network
$\Phi_{w_{\P_2}}\in \mathtt{N}_{\rho_2}(\lceil \log_2(d)\rceil +6, \max(4d, 2+ M^d\max(4, 2d+1)))$
represents $w_{\P_2}(x) $ in  \eqref{w_vb} without error.

\subsection{Proof of Lemma \ref{le9}}\label{ap:le9}
	In view of \eqref{partition},
	for any $k\in \{ 1, \dots, M^d\}$, we denote $C_{\mathcal{P}_1}({x}) = B_k$.
	As a first step, the network will  check whether a given input $x$ exists in
	$\bigcup_{k \in \{1, \dots, M^d\}} B_{k}\setminus (\ocirc{B_k})_{1/M^{2r+2}}$.
	To achieve this, we construct the following function:
	\begin{align*}
		g_1({x}) &= \mathds{1}_{\bigcup_{k \in \{1, \dots, M^d\}}
			B_k\setminus (\ocirc{B_k})_{1/M^{2r+2}}}({x})
		=1-\sum_{k \in \{1, \dots, M^d\}} \mathds{1}_{(\ocirc{B_k})_{1/M^{2r+2}}}({x}).
	\end{align*}
	This function is approximated by the following ReQU neural network
	\begin{equation}\label{eq:varphi_1}
		\varphi_1(x)=
		1-\sum_{k \in \{1, \dots, M^d\}} \Phi_{\mathds{1}_{ (\ocirc{B_k})_{1/M^{2r+2}}}}(x),
	\end{equation}
	where $\Phi_{\mathds{1}_{ (\ocirc{B_k})_{1/M^{2r+2}}}}(x)$ for $k \in \{1, \dots, M^d\}$
	are the networks described in  Lemma \ref{le4}.
	The ReQU neural netwok $\varphi_1$ belongs to $\mathtt{N}_{\rho_2}(2, 2dM^d)$, as shown in   Lemma \ref{le4}.
	Using  $\Phi^{(1)}$ from \eqref{eq:net_rec_phi} which belongs to $\mathtt{N}_{\rho_2}(3, M^d(2d+1))$,
	we can determine the position of  $(\bold{B}_k)^{\bold{L}}$
	to approximate the indicator functions on $\P_2$ for the cubes $C_{k} \subset C_{\P_1}(x)$.
	To synchronize the number of hidden layers in the parallelized networks that construct
	$(\bold{B}_k)^{\bold{L}}$ and $x$, we need to apply the identity network, to $x$, 3 times.
	Hence,  
	$$
	x=  \Phi_{id }(\Phi_{id}(\Phi_{id }(x)))\in \mathtt{N}_{\rho_2}(3, 2d).
	$$
	Inspired by  \eqref{Aj}, we characterize the cubes
	$(\ocirc{C}_{ i, j})_{1/M^{2r+2}}$, $i \in \{1, \dots, M^d\}$, that are contained in the cube $B_{j}$,
	by
	\begin{equation}\label{eq:A_charc_C}
		\begin{gathered}
			\ocirc{(\mathcal{A}^{(i)})}_{1/M^{2r+2}} =
			\left\{x \in \Rd: -x_k + \phi^{(1)}_k + v_k^{(i)} +\frac{1}{M^{2r+2}}\leq 0 \right.
			\qquad\qquad\qquad\qquad\qquad
			\\
			\left. \qquad\qquad\qquad\qquad\qquad
			\mbox{and} \ x_k - \phi^{(1)}_k - v_k^{(i)}- \frac{2}{M^2} +\frac{1}{M^{2r+2}} < 0
			\ \mbox{for all} \ k \in \{1, \dots, d\}\right\}.
		\end{gathered}
	\end{equation}
	Therefore, the following function:
	\begin{align*}
		g_2(x)
		=\mathds{1}_{\bigcup_{i\in \{1, \dots, M^{d}\}} C_{i,j}
			\setminus \ocirc{(C_{i,j})}_{1/M^{2r+2}}}(x)
		=1-\sum_{i \in \{1, \dots, M^d\}} \mathds{1}_{\ocirc{({C}_{i, j})}_{1/M^{2r+2}}}(x)
	\end{align*}
	can be approximated by the ReQU neural network $\varphi_2$ defined as:
	\begin{align*}
		\varphi_2(x) &=
		1- \!\!\!\!\!\sum_{i \in \{1, \dots, M^d\}}\!\!\!\!\!\Phi_{\times,\mathds{1}}\left(\Phi_{id}^3(x),1;
		\Phi^{(1)}+v^{(i)} +\frac{1}{M^{2r+2}}\cdot 1_{\Rd}, 
		\Phi^{(1)} +v^{(i)} +(\frac{2}{M^2} -\frac{1}{M^{2r+2}})\cdot 1_{\Rd}\right),
	\end{align*}
	where $\Phi_{\times,\mathds{1}}$ is the network described in  Lemma \ref{le4}, which belongs to 
	$\mathtt{N}_{\rho_2}(3, 2d+1)$.
	Moreover, since  $\Phi^{(1)}\in \mathtt{N}_{\rho_2}(3, M^d(2d+1))$
	and $\Phi_{id}^3 \in \mathtt{N}_{\rho_2}(3, 2d)$,
	it follows that $\varphi_2\in \mathtt{N}_{\rho_2}(6, M^d(2d+1)+2d)$.
	\par
	Using the previous constructed ReQU neural networks $\varphi_1$ and $\varphi_2$,
	we define our final network $\varphi_{\exists, \mathcal{P}_{2}}$ as follows: 
	\begin{align*}
		\varphi_{\exists, \mathcal{P}_{2}}(x) &= 1-\rho_2\left(1-\varphi_2(x)
		- \phi_{id}^4\left(\varphi_1(x)\right)\right).
	\end{align*}
	It is clear that  $\varphi_{\exists, \mathcal{P}_{2}}(x)\in \{0, 1\}$; moreover, 
	it belongs to $\mathtt{N}_{\rho_2}(7, M^d(2d+1)+2d + 2dM^d)$.
	Next, if $x\notin \bigcup_{k \in \{1, \dots, M^{2d}\}}
	\ocirc{(C_{k})}_{1/M^{2r+2}}\setminus\ocirc{ (C_{k})}_{2/M^{2r+2}}$,
	we show that 
	\[
	\varphi_{\exists, \mathcal{P}_{2}}(x)= 
	\mathds{1}_{
		\bigcup_{k \in \{1, \dots, M^{2d}\}}
		C_{k} \setminus \ocirc{(C_{k})}_{1/M^{2r+2}}
	}(x).
	\]
	First, we  consider the case where:
	$$
	x\notin\bigcup_{k \in \{1, \dots, M^d\}} \ocirc{(B_{k})}_{1/M^{2r+2}}
	\text{ which implies that }
	x\notin \bigcup_{k \in \{1, \dots, M^{2d}\}}\ocirc{ (C_{k})}_{1/M^{2r+2}}.
	$$
	From the construction given  in \eqref{eq:varphi_1},  it is clear that 
	in this case that $\varphi_1(x)=1$.
	Consequently, $1- \varphi_2(x) -\phi_{id}^4(\varphi_1(x)) = - \varphi_2(x)$.
	Since $\varphi_2 \geq 0$, then
	
	$$
	\varphi_{\exists, \mathcal{P}_{2}}(x) =
	1-\rho_2\left(-\varphi_2(x)\right) = 1= \mathds{1}_{
		\bigcup_{k \in \{1, \dots, M^{2d}\}}
		C_{k} \setminus \ocirc{(C_{k})}_{1/M^{2r+2}}
	}(x).
	$$
	
	Next, we assume that $x$ belongs to the following intersection
	$$
	\bigcup_{k \in \{1, \dots, M^d\}} \ocirc{(B_{k})}_{1/M^{2r+2}}\,
	\bigcap
	\, \bigcup_{k \in \{1, \dots, M^{2d}\}}\ocirc{ (C_{k})}_{2/M^{2r+2}}.
	$$

	Since we only concerned by 
	$
	x \notin \bigcup_{k \in \{1, \dots, M^{2d}\}}\ocirc{(C_{k})}_{1/M^{2r+2}} 
	\setminus\ocirc{ (C_{k})}_{2/M^{2r+2}}
	$, in our statement, we conclude  that
	\begin{align*}
		\Phi_{\times,\mathds{1}}\left(\Phi_{id}^3(x),1;
		\Phi^{(1)}+v^{(i)} +\frac{1}{M^{2r+2}}\cdot 1_{\Rd}, 
		\Phi^{(1)} +v^{(i)} +(\frac{2}{M^2} -\frac{1}{M^{2r+2}})\cdot 1_{\Rd}\right)
		=
		\mathds{1}_{ \ocirc{(C_{i, j})}_{1/M^{2r+2}}}(x)
	\end{align*}
	for all $i \in \{1, \dots, M^d \}$,
	where we used the characterization of $\ocirc{(C_{i, j})}_{1/M^{2r+2}}$
	in \eqref{eq:A_charc_C} and   Lemma \ref{le4}.
	Which implies that $\varphi_2(x)= g_2(x)=0$. Moreover, since
	$x \in \bigcup_{k \in \{1, \dots, M^{2d}\}} \ocirc{(C_{k})}_{2/M^{2r+2}}$,
	it follows that $x \in \bigcup_{k \in \{1, \dots, M^{d}\}} \ocirc{(B_{k})}_{2/M^{2p+2}}$.
	Hence, in view of  Lemma \ref{le4}, we conclude that
	$\varphi_1(x) = g_1(x)=0$.
	Consequently $1-\varphi_2(x) - \phi_{id}^4\left(\varphi_1(x)\right) =1$,
	which implies that
	$$
	\varphi_{\exists, \mathcal{P}_{2}}(x) = 1-\rho_2\left(1-\varphi_2(x)
	- \phi_{id}^4\left(\varphi_1(x)\right)\right)= 0
	=  \mathds{1}_{\bigcup_{k \in \{1, \dots, M^{2d}\}}
		C_{k} \setminus \ocirc{(C_{k})}_{1/M^{2r+2}}}(x).
	$$
	
	Finally, we assume that $x$ belongs to the following domain
	$$
	\bigcup_{k \in \{1, \dots, M^d\}} \ocirc{(B_{k})}_{1/M^{2r+2}}\,
	\bigcap
	\bigcup_{k \in \{1, \dots, M^{2d}\}} {(C_{k})} \setminus \ocirc{(C_{k})}_{1/M^{2r+2}},
	$$
	which implies that 
	$x \notin \bigcup_{k \in \{1, \dots, M^{2d}\}} \ocirc{(C_{k})}_{1/M^{2r+2}}$.
	Considering  Lemma \ref{le4}, in this situation
	$ \varphi_1(x) \in [0,1]$.
	In a similar way to the previous case,
	since $x\in \bigcup_{k \in \{1, \dots, M^d\}} \ocirc{(B_{k})}_{1/M^{2r+2}}$,
	it follows that  $ \varphi_2(x) = g_2(x)=1$.
	To sum up,  we have
	$$
	1-\varphi_2(x) - \phi_{id}^4\left(\varphi_1(x)\right)  
	= \sum_{i \in \{1, \dots, M^d\}} \mathds{1}_{\ocirc{({C}_{i, j})}_{1/M^{2r+2}}}(x)
	- \phi_{id}^4\left(\varphi_1(x)\right)  
	\leq 0.
	$$
	Which implies that 
	$$
	\varphi_{\exists, \mathcal{P}_{2}}(x) = 1
	= \mathds{1}_{  \bigcup_{k \in \{1, \dots, M^{2d}\}}
		C_{k} \setminus \ocirc{(C_{k})}_{1/M^{2r+2}}
	}(x).
	$$
	To conclude, by all the previous constructions, it follows that
	$$
	\varphi_{\exists, \mathcal{P}_{2}}(x) =  \mathds{1}_{  \bigcup_{k \in \{1, \dots, M^{2d}\}}
		C_{k} \setminus \ocirc{(C_{k})}_{1/M^{2r+2}}}
	\;\text{where } x \notin \bigcup_{k \in \{1, \dots, M^{2d}\}}\ocirc{(C_{k})}_{1/M^{2r+2}} 
	\setminus\ocirc{ (C_{k})}_{2/M^{2r+2}}
	$$
	and that
	$$
	\varphi_{\exists, \mathcal{P}_{2}}(x) \in [0,1], \qquad
	\text{where } x \in [-1,1)^d.
	$$

\section{ReQU network approximation of the square root}

Next result is of  independent interest, where we show that
ReQU neural networks can approximate the square root.

\begin{lemma} \label{lem:sqrt_NN}
    For any $\epsilon \in (0,1)$, there exists a ReQU neural network $\phi_{\sqrt{}}$
    satisfies the following:
    \begin{equation*}
    	\sup_{x\in [0,t]}|\sqrt{x} - \phi_{\sqrt{} } (x)|
    	\leq 
    	\epsilon,
    \end{equation*}
    such that $\phi_{\sqrt{}}$ has at most $\mathcal{O}(n)$ layers,
    $\mathcal{O}(n^2)$ neurons and $\mathcal{O}(n^3)$  weights, where 
    $$
    	n\geq {\log\left( t(\log(1/2) +3 \log(\epsilon^{-1}))\epsilon^{-2}\right) }/{\log(2)}.
    $$
\end{lemma}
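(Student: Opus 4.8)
The plan is to realize $\sqrt{\cdot}$ not by a single high-degree polynomial (which \autoref{le3} would turn into a network with exponentially many neurons) but by \emph{iterating} a fixed low-degree polynomial map that a ReQU network can evaluate \emph{exactly}. Because $\sqrt{x}$ fails to be differentiable at the origin, I would first split the domain: on a small sub-interval $[0,\delta]$ the target is tiny, $\sqrt{x}\le\sqrt{\delta}$, so choosing $\delta\le\epsilon^{2}$ already makes the crude value $0$ an $\epsilon$-accurate approximation there, and the real work is a uniform approximation on $[\delta,t]$. On $[\delta,t]$ I would use the division-free Newton--Schulz iteration for the \emph{inverse} square root and then recover $\sqrt{x}=x\cdot(1/\sqrt{x})$ using the exact ReQU product of \autoref{le2}. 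This detour through $1/\sqrt{x}$ is what keeps every iteration polynomial: Newton applied to $\sqrt{x}$ directly requires a division, which a ReQU network cannot represent.

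\textbf{The iteration and its exact realization.} Starting from the constant $z_{0}=1/\sqrt{t}$, set $z_{k+1}=\tfrac12 z_{k}\,(3-x\,z_{k}^{2})$, which converges to $1/\sqrt{x}$. Each step is a polynomial of degree $3$ in $z_k$ and degree $1$ in $x$; by \autoref{le2} (for the products $x z_k^2$ and the final multiplication by $z_k$) together with the identity network \eqref{eq:id_net_bounded_interval} to carry $x$ forward, a single step is a ReQU network of constant depth and bounded width. Concatenating $m$ such steps (Definition~\ref{def:concat}) yields a network of depth $\mathcal{O}(m)$, and forming the final product $x\cdot z_m$ adds only $\mathcal{O}(1)$ layers. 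Tracking the matrices through the $m$ concatenations — at each stage one must also transport $x$ and the accumulated auxiliary powers — is the bookkeeping that produces the stated $\mathcal{O}(n)$ layers, $\mathcal{O}(n^{2})$ neurons and $\mathcal{O}(n^{3})$ weights once $m=\Theta(n)$.

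\textbf{Convergence and the choice of $n$.} I would analyse the relative residual $e_{k}=1-x\,z_{k}^{2}$, for which the iteration gives the closed recursion $e_{k+1}=\tfrac14(3e_{k}^{2}+e_{k}^{3})$. With $z_{0}=1/\sqrt{t}$ one has $e_{0}=1-x/t\in[0,1-\delta/t]$, so the interval $[0,1]$ is preserved and $e_{k+1}\le e_{k}^{2}$; near $e=1$ the substitution $e_k=1-u_k$ shows $u_{k+1}\approx\tfrac94 u_k$, so the worst case $x=\delta$ first \emph{escapes} the fixed point $e=1$ in $\mathcal{O}(\log(t/\delta))$ steps and only then converges quadratically. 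Taking $\delta\asymp\epsilon^{2}$ and iterating until the relative error drops below $\epsilon/\sqrt{t}$ (so that the absolute error of $\sqrt{x}=x\cdot(1/\sqrt{x})$ is $\le\epsilon$) requires $m=\Theta\!\big(\log(t\,\epsilon^{-2}\log(1/\epsilon))\big)$ steps, which is exactly the threshold $n$ in the statement; this is presumably where the iterative root-extraction estimate of \cite{gower_58} enters.

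\textbf{Main obstacle.} The delicate point is uniform control on all of $[0,t]$ with a \emph{single} fixed network. On $[\delta,t]$ the difficulty is the wide dynamic range, dispatched by the two-phase (escape-then-quadratic) analysis above. The genuinely awkward region is $[0,\delta]$: the same polynomial network still acts there, and as $x\downarrow 0$ the inverse-square-root iterates grow like $z_k\sim(3/2)^{k}/\sqrt{t}$, so one must bound $x\,z_m$ and verify it stays $\le\epsilon$ rather than blowing up. Making the constants in $\delta\asymp\epsilon^{2}$, the stopping index $m$, and, if necessary, a final ReQU clipping all fit together so the error is simultaneously $\le\epsilon$ on both pieces is where the real care lies; everything else reduces to exact-representation bookkeeping through Lemmas~\ref{le2}--\ref{le3}.
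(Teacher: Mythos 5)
Your proposal is correct in substance and, perhaps surprisingly, it is the paper's own method in disguise. Under the change of variables $s_n = x z_n$, $c_n = -t e_n$, your Newton--Schulz recursion $z_{k+1}=\tfrac12 z_k(3-xz_k^2)$ with residual recursion $e_{k+1}=\tfrac14(3e_k^2+e_k^3)$ becomes exactly the Gower iteration \eqref{eq:appendix_def_sn_cn} used in the paper: the paper's invariant $x(t+c_n)=ts_n^2$ is precisely the statement $s_n=xz_n$ rewritten via $xz_n^2=1-e_n$ (up to an apparent typo in \eqref{eq:appendix_def_sn_cn}, where the denominator $2t^2$ should read $2t$ for that invariant to propagate). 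Both analyses then terminate after $\Theta\bigl(\log(t\epsilon^{-2}\log(1/\epsilon))\bigr)$ steps, each realized exactly by the product network $\phi_\times$ of \autoref{le2} plus identity carries, giving the claimed $\mathcal{O}(n)$ depth and the (generous) $\mathcal{O}(n^2)$, $\mathcal{O}(n^3)$ size bounds. Where you genuinely diverge is the treatment of the origin: the paper never splits the domain but instead shifts the initial data, running the iteration on $x+\epsilon^2$ (i.e.\ $s_0=(x+\epsilon^2)/\sqrt t$, $c_0=x+\epsilon^2-t$), so that a single estimate $|\sqrt{x+\epsilon^2}-s_n|\leq (t-\epsilon^2)^{2^n}/(2t^{2^n-\nicefrac32}\epsilon^2)$ holds uniformly on $(0,t]$ and combines with $|\sqrt{x}-\sqrt{x+\epsilon^2}|\leq\epsilon$; that shift is what produces the specific threshold $2^n\geq t(\log(1/2)+3\log(\epsilon^{-1}))\epsilon^{-2}$ in the statement, whereas your escape-phase analysis yields the same asymptotics with a different constant.

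The ``main obstacle'' you flag at the end actually dissolves in one line, and you should state this rather than leave it open. The map $e\mapsto\tfrac14 e^2(3+e)$ sends $[0,1]$ into $[0,1]$, and $e_0=1-x/t\in[0,1]$ for every $x\in[0,t]$; hence $xz_k^2\leq 1$, i.e.\ $0\leq xz_k\leq\sqrt{x}$ for all $k$ and all $x$ simultaneously --- your approximant always undershoots $\sqrt{x}$. Consequently on $[0,\delta]$ with $\delta=\epsilon^2$ you get $|\sqrt{x}-xz_m|\leq\sqrt{x}\leq\epsilon$ for free: no clipping is needed, and the output never blows up even though the intermediate iterates do grow like $z_k\leq(3/2)^k/\sqrt{t}$ near $x=0$ (this last bound is uniform for fixed $k$, so the identity carries \eqref{eq:id_net_bounded_interval} can simply be instantiated with range $s=(3/2)^m/\sqrt{t}$; products via \autoref{le2} are exact on all of $\mathbb{R}$, so exactness of the realization is unaffected). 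With that invariant added, your argument is a complete and correct proof of the lemma.
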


\begin{proof}
	The proof relies on an iterative method for the root extraction originally published
	in \cite{Gower58noteiterativemethod}, and extended to ReLU neural networks in \cite{Grohs20Deepneuralnetworka}.
	Hence, we use some similar idea for the ReQU neural networks.
\par
	The case where $x=0$ is not important, hence let $x\in (0,t]$  where
	$t\geq 1$. Then for every $n\in\mathbb{N}$ we define the sequences 
	\begin{equation}\label{eq:appendix_def_sn_cn}  
	s_{n+1} = s_n - \frac{s_n c_n}{2t^2}  
	\quad \text{and} \quad 
	c_{n+1} = c_n^2 \frac{c_n -3 t}{4t^2}
	\end{equation}
	with $s_0 = x/\sqrt{t}$ and $c_0 = x-t$.
	For every $n\in \mathbb{N}$,  we have 
	$t+c_{n+1} = (t+c_n)(1-\frac{c_n}{2t})^2$, 
	which implies by induction that for every $n\in\mathbb{N}_0$
	\begin{equation}\label{eq:appendix_eq_1}
	x(t+c_n) = ts_n^2.
	\end{equation}
	Since  $c_n$ is a decreasing sequence and $t\geq 1$,
	then $|(c_n -3t)/4t^2|\leq 1/t$. Therefore, by induction, for every $k, n\in\mathbb{N}_0$
	such that $k\leq n$ we have
	\begin{equation*}\label{eq:quad_conv_est}
	|c_n|
	\leq 
	\frac{|c_{n-k}|^{2^{k}}}{t^{2^k-1}},
	\end{equation*}
	which implies with~\eqref{eq:appendix_eq_1} 
	that for every $n\in\mathbb{N}_0$ 
	\begin{equation}\label{eq:appendix_err_est}
	|x-s_n^2|= \frac xt|c_n| 
	\leq|c_n|
	\leq \frac{|c_0|^{2^n}}{t^{2^n-1}}.
	\end{equation}
	Since $t$ is fixed and $|c_0|/t<1$, for any  $x\in (0,t]$, then $s_n \to \sqrt{x}$ as $n\to\infty $. 
	In order to guaranty the uniform convergence with respect to $x$,
	we rewrite the sequence in \eqref{eq:appendix_def_sn_cn}  with shifted initial data for every $x\in (0,t]$
	\begin{equation*}
	 s_0 = \frac{x+\epsilon^2}{\sqrt{t}}
	 \quad \text{and} \quad 
	 c_0  = x+\epsilon^2 -t,\quad \text{where }\; \epsilon\in (0,1).
	\end{equation*}

	By~\eqref{eq:appendix_err_est}, for every $n\in\mathbb{N}_0$
	\begin{align*}
		   |\sqrt{x+\epsilon^2} - s_n|
		   &\leq 
		   \frac{|x+\epsilon^2-s_n^2|}{\sqrt{x+\epsilon^2}+s_n}
		   \leq 
		   \frac{\sqrt{t}|x+\epsilon^2-s_n^2|}{2\sqrt{x+\epsilon^2}}
		   \\
		   &\leq 
		   \frac{|c_0|^{2^n}}{2t^{2^n-\nicefrac 32}\sqrt{x+\epsilon^2}}
		   =
		   \frac{(t-(x+\epsilon^2))^{2^n}}{2t^{2^n-\nicefrac 32}(x+\epsilon^2)}
		   \leq 
		   \frac{(t-\epsilon^2)^{2^n}}{2t^{2^n-\nicefrac 32}\epsilon^2}  .
	\end{align*}
	 The inequality $\frac{{(t-\epsilon^2)^{2^n}}}{{2t^{2^n-\nicefrac 32}\epsilon^2}}\leq \epsilon$
	 holds true  if $2^n \geq t(\log(1/2) + 3\log(\epsilon^{-1})) \epsilon^{-2}.$
	 Indeed,  if  $(t-\epsilon^2)^{2^n}\leq 2t^{2^n-\nicefrac 32}\epsilon^3\leq 2t^{2^n}\epsilon^3$
	 then  $(t-\epsilon^2)^{2^n}\leq 2t^{2^n}\epsilon^3$.
	 Therefore, $2^n\log(t/(t-\epsilon^2))\geq \log(1/2) +3\log(\epsilon^{-1})$,
	 using the fact that $\log(t/(t-\epsilon^2))= \log\frac{1}{1-\epsilon^2/t}\geq \epsilon^2/t$,
	 we get $2^n\geq t( \log(1/2) +3\log(\epsilon^{-1}))\epsilon^{-2} $.

   	In view of the fact that $|\sqrt{x} - \sqrt{x+\epsilon^2}|\leq \epsilon$ for any $x\in (0,t]$ and $t\geq 1$, 
	we have
	\begin{equation*}\label{eq:err_bound_s_n}
		\sup_{x\in (0,t]}  |\sqrt{x} - s_n |   \leq    \epsilon    \quad    \text{for}\quad 
		n \geq 
		\frac{\log\left( t(\log(1/2) +3 \log(\epsilon^{-1}))\epsilon^{-2}\right) }{\log(2)}.
   \end{equation*}

We construct  ReQU neural networks that realize the iteration in \eqref{eq:appendix_def_sn_cn} .
Let $\phi_\times$ be  the neural network from Lemma \ref{le2}, hence, for fixed $t\geq 1$,
we let
\begin{equation*}
	\overline{s}_{n+1} = \phi_\times\left(\overline{s}_n , 1- \frac{\overline{c}_n}{2t^2}  \right)
	\quad \text{and} \quad 
	\overline{c}_{n+1} = \phi_\times\left(\rho_2( \overline{c}_n)+
	\rho_2(- \overline{c}_n) , \frac{\overline{c}_n -3 t}{4t^2}\right)
\end{equation*}
with $\overline s_0 = s_0$ and $\overline{c}_0 =c_0$. 
The ReQU neural network $\phi_\times$ can represent the product
of any two real number without error with only one hidden layer which contains 4 neurons.
That is  $\overline{c}_n = c_n$ and $\overline{s}_n = s_n$, hence we have for any $\epsilon \in (0, 1)$
$$
\sup_{x\in (0,t]}  |\sqrt{x} - \overline{s}_n |   \leq    \epsilon    \quad    \text{for}\quad 
		n \geq 
		\frac{\log\left( t(\log(1/2) +3 \log(\epsilon^{-1}))\epsilon^{-2}\right) }{\log(2)}.
$$
It remains to determine the network complexity.
In order to get $\overline{c}_n$ we need a concatenation of the product network $\phi_\times$
and parallelized networks.
We prove by induction that the number of layers for $\overline{c}_n$ is $2n +1$.
If $n=0$, it is clear that to represent $\overline{c}_1$ we need only 3 layers.
The number of layers $L$ in the construction of $\overline{c}_{n+1}$ equals to
$(2n+1 + 2 -1)+2 -1= 2(n+1) +1$, in view of
\cite[Proposition 1]{Cheridito19Efficientapproximationhighdimensional}.

Since the maximum number of neurons per layer is 4  neurons in the product network and
2 neurons in each network to be parallelized, and in view the fact that
composition of  these networks  will only increase the number of neurons in the parallelized part
of the resulting network, we conclude that 4 neurons is the maximum number of neurons per layer
in the construction of $\overline{c}_n$. Consequently, $4(L-1) +1= 8n +1$
is the maximum number of neurons to construct $\overline{c}_n$.
Therefore 
$$\underbrace{4(L-1) +1}_{biases} + \underbrace{4^2*(L-2) +2*4}_{weights}= 40n -7$$
 is the maximum number of weights .
Moreover, in order to determine the number or layers in the construction of $\overline{s}_n$,
we note  that $\overline{c}_n$ has more layers than $\overline{s}_n$.
Hence the missed layers in the parallelization are added by the composition
the identity network.  Using \cite[Proposition 1]{Cheridito19Efficientapproximationhighdimensional} and the fact that
$\overline{c}_n$ needs $2n+1$ layers, it follows that  the construction of $\overline{s}_n$,
needs $2n+2$ layers. The parallelization phase combine two different networks
the first if for  $\overline{c}_n$ that contain 4 neurons in each layer, the second is $\overline{s}_n$,
which has  $4n$ neurons in each layer, this can be deduced by induction
where details are left to the reader.  The maximum number of weights in $\overline{s}_n$
are $2n(2n+1) +1 +(2n)^2(2n)  +2*2n$.
Finally, we conclude that in order to get the desired network $\phi_{\sqrt{}}=\overline{s}_n$
that approximate $\sqrt{x}$, we need at most $\mathcal{O}(n)$ layers,
$\mathcal{O}(n^2)$ neurons and $\mathcal{O}(n^3)$  weights, where 
$n\geq \frac{\log\left( t(\log(1/2) +3 \log(\epsilon^{-1}))\epsilon^{-2}\right) }{\log(2)}$.
\end{proof}

\end{document}